\newtheoremstyle{plain}% name
	  {}%      Space above, empty = `usual value'
	  {}%      Space below
	  {\itshape}% Body font
	  {}%         Indent amount (empty = no indent, \parindent = para indent)
	  {\bfseries}% Thm head font
	  {}%        Punctuation after thm head
	  {5pt plus 1pt minus 1pt}% Space after thm head: \newline = linebreak
	  {}%         Thm head spec
\newtheoremstyle{definition}% name
  	  {}%      Space above, empty = `usual value'
	  {}%      Space below
	  {\normalfont}% Body font
	  {}%         Indent amount (empty = no indent, \parindent = para indent)
	  {\bfseries}% Thm head font
	  {}%        Punctuation after thm head
	  {5pt plus 1pt minus 1pt}% Space after thm head: \newline = linebreak
	  {}%         Thm head spec
\theoremstyle{plain}
\newtheorem{lemma}{Lemma}
\newtheorem{proposition}{Proposition}
\newtheorem{corollary}{Corollary}
\theoremstyle{definition}
\newtheorem{definition}{Definition}
\newcommand{\refeq}[1]			{(\ref{#1})} % Equation
\newcommand{\reffig}[1]			{Fig. \ref{#1}} % Figure
\newcommand{\refsec}[1]			{Section \ref{#1}}
\newcommand{\refprop}[1]		{Proposition \ref{#1}}
\newcommand{\refdef}[1]			{Definition \ref{#1}}
\newcommand{\reffn}[1] 		    {\textsuperscript{\ref{#1}}}
\theoremstyle{plain}
\newcommand{\R}  	{\mathbb{R}} % Real Numbers
\newcommand{\dimspace} 	{d}
\newcommand{\radius} 	{\rho}
\newcommand{\conv}      {\mathrm{conv}} % Convex hull
\newcommand{\freespace}	{\mathcal{F}} % Free Space
\newcommand{\workspace}	{\mathcal{W}} % Workspace
\newcommand{\obstspace}	{\mathcal{O}} % Obstacle Space
\newcommand{\ball}      {\mathrm{B}} % Euclidean Ball
\newcommand{\clearance} {\epsilon} % Safety clearance
\newcommand{\pos} 		{\vect{x}} 			% Position
\newcommand{\dimpos}		{\dimspace_{\pos}}   % Dimension of Position
\newcommand{\nonpos}    	{\vect{v}} % Nonpositional State
\newcommand{\dimnonpos} 	{\dimspace_{\nonpos}} % Dimension of Nonposition State
\newcommand{\orient}	    {\theta}			% Unicycle Orientation
\newcommand{\linvel}     {v}  % Unicycle Linear Velocity 
\newcommand{\angvel}     {\omega} % Unicycle Angular Velocity
\newcommand{\goal}		{\vect{x}^*} % Goal Position
\newcommand{\phdctrl}{{\hat{\vect{u}}}}
\newcommand{\state}		{\mat{x}} 			% System State
\newcommand{\order}     {n} 				% System Order
\newcommand{\gain}   	{\kappa} 			% System Control Gain
\newcommand{\dyn}		{\vect{f}} % System Dynamics
\newcommand{\ctrl}      {\vect{u}} 			% Robot Control
\newcommand{\navctrl}  	{\overline{\vect{u}}} % Robot Navigation Control
\newcommand{\pathctrl}   {\vect{u}} % Path-Following Control 
\newcommand{\dimctrl}   {{\dimspace_{\ctrl}}}  % Dimension of Robot Control
\newcommand{\lyapmat}	{\mat{P}} % Lyapunov Matrix
\newcommand{\decaymat}  {\mat{D}} % Decay matrix
\newcommand{\phdmat}    {\mat{C}}
\newcommand{\refpath}	{\vect{p}} % Reference Path
\newcommand{\pathparam}  {s} % Path Parameter
\newcommand{\maxpathparam}{{s_{\max}}} % Maximum Path Parameter
\newcommand{\minpathparam}{{s_{\min}}} % Minimum Path Parameter
\newcommand{\pathparamrate} {v_{s}} % Path parameter speed/rate
\newcommand{\pathdyn} 	{g} % Path parameter dynamics 
\newcommand{\motionset}		{\mathcal{M}} % Feedback Motion Prediction Set
\newcommand{\setradius}		{r} % Set Radius
\newcommand{\motionelp}	{\mathcal{ME}} % Motion Ellipsoid
\newcommand{\motionspx}{\mathcal{MS}} % Motion Simplex
\newcommand{\safetylevel}{\sigma} % Safety Level
\newcommand{\phdroot}			{\lambda}
\newcommand{\phdroots}			{\boldsymbol{\lambda}}
\newcommand{\phdgain}			{\kappa}
\newcommand{\PDM} 	{S_{++}} % Positive definite matrices
\newcommand{\PSDM} 	{S_{+}} % Positive semidefinite matrices
\newcommand{\dist} {\mathrm{dist}} % Distance
\newcommand{\elp}		{\mathcal{E}} % Ellipsoid
\newcommand{\elpctr}	{\vect{c}} % Ellipsoid Center
\newcommand{\elpmat} 	{\mat{\Sigma}} % Ellipsoid Matrix
\newcommand{\elprad}	{\rho} % Ellipsoid Radius/Scale
\let\originalleft\left
\let\originalright\right
\renewcommand{\left}{\mathopen{}\mathclose\bgroup\originalleft}
\renewcommand{\right}{\aftergroup\egroup\originalright}
\newcommand{\plist}[1] 	{\left(#1\right)} % Parenthesis (Round Bracket) List 
\newcommand{\blist}[1]	{\left[ #1 \right]} % (Square) Bracket List
\newcommand{\clist}[1]	{\left\{#1\right\}} % Curly Bracket List 
\newcommand{\vect}[1]   {\mathrm{#1}}
\newcommand{\mat}[1]    {\mathbf{#1}}
\newcommand{\tr}[1] {{#1}^{\mathrm{T}}} % Transpose operator
\newcommand{\norm}[1]  {\|#1\|}
\newcommand{\absval}[1]{\left|#1 \right|} % Absolute value
\newcommand{\diag} {\mathrm{diag}} % Diagonal matrix
\newcommand{\diff} {\mathrm{d}}
\begin{document}

\title{Time Governors for Safe Path-Following Control}

%\author{IEEE Publication Technology,~\IEEEmembership{Staff,~IEEE,}
%        % <-this % stops a space
%\thanks{This paper was produced by the IEEE Publication Technology Group. They are in Piscataway, NJ.}% <-this % stops a space
%\thanks{Manuscript received April 19, 2021; revised August 16, 2021.}}

\author{\"{O}m\"{u}r Arslan% <-this % stops a space
\thanks{The author is with the Department of Mechanical Engineering, Eindhoven University of Technology, P.O. Box 513, 5600 MB Eindhoven, The Netherlands. The author is also affiliated with the Eindhoven AI Systems Institute. Emails:   o.arslan@tue.nl}%
% <-this % stops a space
}

%% The paper headers
%\markboth{Journal of \LaTeX\ Class Files,~Vol.~14, No.~8, August~2021}%
%{Shell \MakeLowercase{\textit{et al.}}: A Sample Article Using IEEEtran.cls for IEEE Journals}

\markboth{Technical Report, December~2022}%
{Arslan: Time Governors for Safe Path-Following Control}

%\IEEEpubid{0000--0000/00\$00.00~\copyright~2021 IEEE}
%% Remember, if you use this you must call \IEEEpubidadjcol in the second
%% column for its text to clear the IEEEpubid mark.

\maketitle

\begin{abstract}
Safe and smooth robot motion around obstacles is an essential skill for autonomous robots, especially when operating around people and other robots. 
Conventionally, due to real-time operation requirements and onboard computation limitations, many robot motion planning and control methods follow a two-step approach: first construct a (e.g., piecewise linear) collision-free reference path for a simplified robot model, and then execute the reference plan via path-following control for a more accurate and complex robot model.    
A challenge of such a decoupled robot motion planning and control method for highly dynamic robotic systems is ensuring the safety of path-following control as well as  the successful completion of the reference plan.
In this paper, we introduce a novel dynamical systems approach for online closed-loop time parametrization, called \emph{a time governor}, of a reference path for provably correct and safe path-following control based on feedback motion prediction, where the safety of robot motion under path-following control is continuously monitored using predicted robot motion. 
After introducing the general framework of time governors for safe path following, we present an example application for the fully actuated high-order  robot dynamics using proportional-and-higher-order-derivative (PhD) path-following control whose feedback motion prediction is performed by Lyapunov ellipsoids and Vandemonde simplexes.
In numerical simulations, we investigate the role of reference position and velocity feedback, and motion prediction on path-following~performance and robot motion.
\end{abstract}

\begin{IEEEkeywords}
Motion planning and control, path planning, trajectory planning, path time-parametrization, path following, trajectory tracking,  motion prediction, reference governors.
\end{IEEEkeywords}

%%%%%%%%%%%%%%%%%%%%%%%%%%%%%%%%%%%%%%%%%%%%%%%%%%%%%%%%%%%%%%%%%%%%%%%%%%%%%%%%
%%%%%%%%%%%%%%%%%%%%%%%%%%%%%%%%%%%%%%%%%%%%%%%%%%%%%%%%%%%%%%%%%%%%%%%%%%%%%%%%
\section{Introduction}
\label{sec.Introduction}
%%%%%%%%%%%%%%%%%%%%%%%%%%%%%%%%%%%%%%%%%%%%%%%%%%%%%%%%%%%%%%%%%%%%%%%%%%%%%%%%
%%%%%%%%%%%%%%%%%%%%%%%%%%%%%%%%%%%%%%%%%%%%%%%%%%%%%%%%%%%%%%%%%%%%%%%%%%%%%%%%

Achieving truly reliable and dependable robots that autonomously operate around people and other robots requires provably correct, safe, and smooth robot motion planning and control methods.
Kinodynamic planning of dynamically feasible and safe robot motion is known to be computationally hard for many robotic systems \cite{donald_etal_JACM1993, lavalle_kuffner_IJRR2001}, because determining the safety of highly dynamic robot motion is difficult \cite{fraichard_asama_AR2004}.
As a result, due to real-time operation requirements and onboard computation limitations, the state-of-the-art robot motion planning and control methods often follow a two-step path planning and following approach that decouples high-level path planning and low-level motion control as follows \cite{choset_etal_PrinciplesOfRobotMotion2005, siciliano_etal_RoboticsModellingPlanningControl2009, lynch_park_ModernRobotics2017}: (i) first find a (e.g., piecewise linear/smooth) collision-free reference path for a simplified robot model, (ii) and then realize the reference plan as accurately as possible by path-following control of the actual complex robot dynamics.
However, systematically ensuring both the safety and correctness of such a decoupled robot motion planning and control approach is still an open research challenge.

In this paper, for a generic family of high-level reference path planners \cite{lavalle_PlanningAlgorithms2006} and low-level path-following control policies \cite{siciliano_etal_RoboticsModellingPlanningControl2009}, we present a new dynamical systems approach for online closed-loop time parametrization, called \emph{a time governor}, of the reference path for provably correct and safe path-following control based on feedback motion prediction and the safety assessment of predicted robot motion, as illustrated in \reffig{fig.TimeGovernor}.
After presenting our general time governor framework for safe path-following control, we provide an example application for the  fully actuated high-order robot dynamics using  proportional-and-higher-order-derivative (PhD) path following control whose feedback motion prediction is performed by Lyapunov ellipsoids and Vandemonde simplexes \cite{isleyen_vandewouw_arslan_RAL2022, arslan_isleyen_2022}.

\begin{figure}
\centering
\includegraphics[width=\columnwidth]{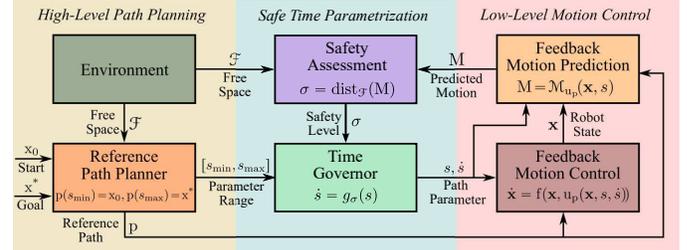} 
\caption{A time governor is a dynamical system that performs online feedback time parametrization of a reference path for safe  path-following control around obstacles based on the safety assessment of predicted robot motion.}
\label{fig.TimeGovernor}
\end{figure}

%%%%%%%%%%%%%%%%%%%%%%%%%%%%%%%%%%%%%%%%%%%%%%%%%%%%%%%%%%%%%%%%%%%%%%%%%%%%%%%%
%%%%%%%%%%%%%%%%%%%%%%%%%%%%%%%%%%%%%%%%%%%%%%%%%%%%%%%%%%%%%%%%%%%%%%%%%%%%%%%%
\subsection{Motivation and Related Literature}
\label{sec.Literature}
%%%%%%%%%%%%%%%%%%%%%%%%%%%%%%%%%%%%%%%%%%%%%%%%%%%%%%%%%%%%%%%%%%%%%%%%%%%%%%%%
%%%%%%%%%%%%%%%%%%%%%%%%%%%%%%%%%%%%%%%%%%%%%%%%%%%%%%%%%%%%%%%%%%%%%%%%%%%%%%%%

%%%%%%%%%%%%%%%%%%%%%%%%%%%%%%%%%%%%%%%%%%%%%%%%%%%%%%%%%%%%%%%%%%%%%%%%%%%%%%%%
%%%%%%%%%%%%%%%%%%%%%%%%%%%%%%%%%%%%%%%%%%%%%%%%%%%%%%%%%%%%%%%%%%%%%%%%%%%%%%%%
\subsubsection{Path Planning \& Following versus Trajectory Planning \& Tracking}
%%%%%%%%%%%%%%%%%%%%%%%%%%%%%%%%%%%%%%%%%%%%%%%%%%%%%%%%%%%%%%%%%%%%%%%%%%%%%%%%
%%%%%%%%%%%%%%%%%%%%%%%%%%%%%%%%%%%%%%%%%%%%%%%%%%%%%%%%%%%%%%%%%%%%%%%%%%%%%%%%

Motion representation plays a key role in robot motion planning and control.
As a standard motion representation, a \emph{path} is a purely geometric description of robot motion as a sequence of robot configurations (e.g., position) without a specific timing requirement, whereas a \emph{trajectory} is a time sequence of robot configurations that describes a desired robot motion \cite{choset_etal_PrinciplesOfRobotMotion2005, siciliano_etal_RoboticsModellingPlanningControl2009, lynch_park_ModernRobotics2017}.
Hence, a trajectory can be considered as a combination of a path and its time parametrization (a.k.a. time scaling \cite{dahl_nielsen_TRA1990}, time allocation \cite{richter_bry_row_ISRR2016}, and timing law \cite{aguiar_hespanha_kokotovic_Automatica2008}). 
Similar to path planning and following, trajectory planning and tracking is performed in a decoupled and sequential fashion in three steps by (i) first finding a dynamically feasible and collision-free path, (ii) and then determining its time parametrization under safety and control constraints,  (iii) and finally executing the reference trajectory using trajectory-tracking control \cite{choset_etal_PrinciplesOfRobotMotion2005, lynch_park_ModernRobotics2017}.
However, trajectory planning and tracking might be more limited than path planning and following since the resulting robot motion plan is required to be dynamically feasible and achievable at specified times which inherently causes some performance limitations for trajectory tracking that are not relevant to path following \cite{aguiar_etal_IFAC2004, aguiar_hespanha_kokotovic_Automatica2008}. 
As a coupled robot motion planning and control approach, trajectory optimization aims at constructing a dynamically feasible and safe robot trajectory (often with associated control in a model predictive control formulation \cite{neunert_etal_ICRA2016}) at once using numerical optimization tools \cite{betts_JGCD1998}, but it is usually slower for real-time application settings and its quality strongly depends on initialization \cite{choset_etal_PrinciplesOfRobotMotion2005}.
Due to their open-loop nature, trajectory planning and optimization methods often suffer from significant replanning cycles in practice for ensuring safety and high trajectory tracking performance \cite{ding_gao_wang_shen_TRO2019, tordesillas_etal_ToR2021}, which might be mitigated up to a certain level by offline numerically precomputed tracking error bounds and replanning with simplified robot models \cite{chen_etal_TAC2021}.
In this paper, we introduce a new online feedback path time-parametrization approach for provably correct and safe path-following control that allows for closing the gap between trajectory planning \& tracking and path planning \& following to leverage the strengths of both approaches.

%%%%%%%%%%%%%%%%%%%%%%%%%%%%%%%%%%%%%%%%%%%%%%%%%%%%%%%%%%%%%%%%%%%%
%%%%%%%%%%%%%%%%%%%%%%%%%%%%%%%%%%%%%%%%%%%%%%%%%%%%%%%%%%%%%%%%%%%%
\subsubsection{Path Time-Parametrization}
%%%%%%%%%%%%%%%%%%%%%%%%%%%%%%%%%%%%%%%%%%%%%%%%%%%%%%%%%%%%%%%%%%%%
%%%%%%%%%%%%%%%%%%%%%%%%%%%%%%%%%%%%%%%%%%%%%%%%%%%%%%%%%%%%%%%%%%%%

A classical way of trajectory generation is a time parametrization of a prescribed path by a monotone increasing scalar function of time that specifies at which time a robot configuration along a path is  realized \cite{choset_etal_PrinciplesOfRobotMotion2005, lynch_park_ModernRobotics2017}. 
Existing path time-parametrization methods are mainly based on either numerical optimization or analytic feedback control methods. 
Optimal time parametrization of paths aims at generating an optimal trajectory along which a given robot motion cost (e.g., travel time and/or control effort) is minimized while satisfying safety and control constraints.
Such optimal path time-parametrization methods are often applied for minimum-time robot motion design with actuation constraints, for example, for aggressive drone flight \cite{richter_bry_row_ISRR2016,  gao_wu_pan_zhou_shen_IROS2018, tordesillas_etal_ToR2021}, robotic manipulation \cite{bobrow_dubowsky_gibson_IJRR1985, shin_mckay_TAC1985},  legged locomotion \cite{pham_pham_TRO2018}, and autonomous driving \cite{lie_zhan_tomizuka_IVS2017}.  
Besides high computational cost, a limitation of these optimal path time-parametrization methods is their sensitivity to modelling errors and disturbances since an optimal trajectory often pushes safety and  system (e.g., actuation) limits; 
whereas online feedback time reparametrization of nominal time-optimal trajectories using time-scaling tracking control increases system robustness \cite{dahl_nielsen_TRA1990}.

Online feedback path time-parametrization is often performed by controlling the progress of a ``\emph{virtual target}'' along the path \cite{micaelli_samson_TechReport1993}; for example, based on the distance between the robot and the virtual target \cite{mohan_etal_Access2020, paliotta_etal_TCST2019, lefeber_ploeg_nijmeijer_IFAC2017} such that the virtual target slows down when the robot is far away from the target to avoid significant deviations from the path (a.k.a., the corner cutting problem) \cite{mohan_etal_Access2020, paliotta_etal_TCST2019}.
Feedback path time-parametrization is demonstrated to yield robust path-following control of  mobile wheeled robots \cite{aicardi_etal_RAM1995, mohan_etal_Access2020}, underactuated marine vehicles \cite{paliotta_etal_TCST2019, aicardi_etal_MCCA2001}, fixed-wind aerial vehicle \cite{flores_luco-cardenas_lozano_ICUAS2013} and vehicle platooning \cite{lefeber_ploeg_nijmeijer_IFAC2017}.
Although these feedback path time-parametrization methods allow bounding path following/tracking error (which is often used as a safety heuristic), they fail to ensure safe path-following control.
As a result, feedback path-time parametrization is often combined with obstacle avoidance  \cite{lapierre_zapata_lepinay_ICRA2007, lapierre_zapata_lepinay_IJRR2007} and model predictive control \cite{mohan_etal_Access2020} to prioritize safety while path following such that the path following task is abandoned if the safety of the robot is at risk.
In this paper, to the best of our knowledge, we show for the first time how to perform provably correct and safe path-following control with online feedback path time-parametrization using feedback motion prediction.

%%%%%%%%%%%%%%%%%%%%%%%%%%%%%%%%%%%%%%%%%%%%%%%%%%%%%%%%%%%%%%%%%%%%%%%%%%%%%%%
%%%%%%%%%%%%%%%%%%%%%%%%%%%%%%%%%%%%%%%%%%%%%%%%%%%%%%%%%%%%%%%%%%%%%%%%%%%%%%%
\subsubsection{Path-Following Control}
%\subsubsection{Path Following Control via Online Time Parametrization}
%%%%%%%%%%%%%%%%%%%%%%%%%%%%%%%%%%%%%%%%%%%%%%%%%%%%%%%%%%%%%%%%%%%%%%%%%%%%%%%
%%%%%%%%%%%%%%%%%%%%%%%%%%%%%%%%%%%%%%%%%%%%%%%%%%%%%%%%%%%%%%%%%%%%%%%%%%%%%%%

The objective of path-following control is to realize a given reference path as closely as possible, often using a lookahead goal selection criterion based on the reference path and robot (state) information.
A simple yet effective geometric path-following control approach is the pure pursuit strategy that determines a local goal along a reference path that is within a certain lookahead distance from the robot \cite{coulter_TechReport1992}.
The performance of the pure pursuit path-following strategy is known to significantly depend on the selection of a lookahead distance \cite{amidi_thotpe_MR1991, ohta_etal_CPSNA2016, park_lee_han_ETRIJ2015, morales_etal_JASP2009}.
While a short lookahead distance causes some stability issues with oscillatory robot motion along the reference path, a long lookahead distance often results in large deviations from the path and so the well-known corner-cutting problem.        
To reduce these stability and corner-cutting issues, several adaptive lookahead  distance selection methods are proposed based on robot velocity \cite{park_lee_han_ETRIJ2015, park_lee_han_ICCAS2014, ohta_etal_CPSNA2016, hingwe_tomizuka_ACC1998, kuwata_etal_GNC2008}, the robot's  distance to path \cite{campbell_MScThesis2007, giesbrecht_etal_TechReport2005, serna_etal_ACI2017}, heading  deviation \cite{wang_etal_CCC2019, bayuwindra_etal_TCST2019}, path curvature \cite{ahn_etal_IJAT2021},
reinforcement learning \cite{goel_chauhan_ICRS2021},  fuzzy decision rules \cite{ollero_garcia-cerezo_martinez_IFAC1993}, and  numerical optimization \cite{wang_Etal_IEEEAccess2020, sukhil_behl_arXiv2021}, as well as more advanced (e.g., vector-pursuit) path-following control approaches that use both lookahead goal position and orientation  \cite{andersen_etal_AIM2016, wit_crane_armstrong_JRS2004, park_lee_han_ETRIJ2015}.
As in virtual-agent based path time-parametrization, the safety of such geometric (pure pursuit and vector pursuit) path-following  methods is often  heuristically  addressed by minimizing path following/tracking error without any formal guarantees.
The virtual-agent interpretation of feedback path time-parametrization allow us to extend the pure pursuit approach via our time governor framework for provably correct and safe path-following control, which is also a step towards closing the gap between path-following and trajectory-tracking.%
\footnote{\label{fn.PathFollowingPathTracking}The major difference between path-following control and trajectory-tracking control is that trajectory tracking aims at asymptotically bringing a notion of a tracking error to zero with feedforward control whereas path following often has a non-zero tracking error because it purely based on negative error feedback without feedforward control.}

%%%%%%%%%%%%%%%%%%%%%%%%%%%%%%%%%%%%%%%%%%%%%%%%%%%%%%%%%%%%%%%%%%%%%%%%%%%%%%%%%%
%%%%%%%%%%%%%%%%%%%%%%%%%%%%%%%%%%%%%%%%%%%%%%%%%%%%%%%%%%%%%%%%%%%%%%%%%%%%%%%%%%
\subsubsection{Reference Governors}
%%%%%%%%%%%%%%%%%%%%%%%%%%%%%%%%%%%%%%%%%%%%%%%%%%%%%%%%%%%%%%%%%%%%%%%%%%%%%%%%%%
%%%%%%%%%%%%%%%%%%%%%%%%%%%%%%%%%%%%%%%%%%%%%%%%%%%%%%%%%%%%%%%%%%%%%%%%%%%%%%%%%%

Our design of time governors are inspired by reference governors for constrained control of dynamical systems \cite{bemporad_TAC1998, gilbert_kolmanovsky_Automatica2002, garone_nicotra_TAC2015}. 
Reference governors are applied for robot motion planning and control as a constraint-handling planning-control interface between high-level motion planning and low-level motion control to minimally modify a high-level reference plan before executing it by low-level control in order to ensure stability, safety, and  constraint satisfaction \cite{arslan_koditschek_ICRA2017, li_arslan_atanasov_ICRA2020, li_duong_atanasov_arXiv2020, isleyen_vandewouw_arslan_RAL2022, isleyen_vandewouw_arslan_arXiv2022}.  
The use of reference governors for safe robot motion planning and control suggests that the quality of feedback motion prediction (e.g., Lyapunov ellipsoids and Vandermonde simplexes) plays a key role in governed robot motion design and safety assessment, because accurate motion prediction avoids conservatism and allows for faster robot motion without compromising safety \cite{li_arslan_atanasov_ICRA2020, isleyen_vandewouw_arslan_RAL2022}.
In this paper, instead of geometrically modifying a reference plan by a reference governor, we consider the temporal adaptation of a reference path via a time governor to enable provably correct and safe path-following control. 
At a more conceptual level, we believe that time governors offers a new perspective for temporal adaptation of high-level planning for constrained control of dynamical systems.

%%%%%%%%%%%%%%%%%%%%%%%%%%%%%%%%%%%%%%%%%%%%%%%%%%%%%%%%%%%%%%%%%%%%%%%%%%%%%%%%
%%%%%%%%%%%%%%%%%%%%%%%%%%%%%%%%%%%%%%%%%%%%%%%%%%%%%%%%%%%%%%%%%%%%%%%%%%%%%%%%
\subsection{Contributions and Organization of the Paper}
%%%%%%%%%%%%%%%%%%%%%%%%%%%%%%%%%%%%%%%%%%%%%%%%%%%%%%%%%%%%%%%%%%%%%%%%%%%%%%%%
%%%%%%%%%%%%%%%%%%%%%%%%%%%%%%%%%%%%%%%%%%%%%%%%%%%%%%%%%%%%%%%%%%%%%%%%%%%%%%%%

This paper introduces a new generic time governor framework for online feedback time parametrization of a reference path for provably correct and safe path-following control around obstacles.
In \refsec{sec.TimeGovernorsProblemFormulation}, we start with a formal problem description for the design of time governors for safe path following, which shall serve a basis for further research on this topic.
In \refsec{sec.TimeGovernorsGeneralFramework}, we present a generic time governor design approach using feedback motion prediction and discuss about its safety and convergence properties.
In \refsec{sec.TimeGovernorsExample}, we provide an example construction for the fully actuated high-order  robot dynamics based on proportional-and-higher-order-derivative (PhD) path-following control and associated Lyapunov and Vandemonde motion predictions.   
In \refsec{sec.NumericalSimulations}, we systematically investigate the role of reference position and velocity feedback and motion prediction on robot behaviour and path-following performance in numerical simulations.  
We conclude in \refsec{sec.Conclusions} with a summary of our contributions and future work.

%%%%%%%%%%%%%%%%%%%%%%%%%%%%%%%%%%%%%%%%%%%%%%%%%%%%%%%%%%%%%%%%%%%%%%%%%%%%%%%%
%%%%%%%%%%%%%%%%%%%%%%%%%%%%%%%%%%%%%%%%%%%%%%%%%%%%%%%%%%%%%%%%%%%%%%%%%%%%%%%%
\section{Time Governors for Safe Path Following: Problem Formulation}
\label{sec.TimeGovernorsProblemFormulation}
%%%%%%%%%%%%%%%%%%%%%%%%%%%%%%%%%%%%%%%%%%%%%%%%%%%%%%%%%%%%%%%%%%%%%%%%%%%%%%%%
%%%%%%%%%%%%%%%%%%%%%%%%%%%%%%%%%%%%%%%%%%%%%%%%%%%%%%%%%%%%%%%%%%%%%%%%%%%%%%%%

For ease of exposition, we consider a disk-shaped robot of radius $\radius >0$ that is centered at position $\pos \in \workspace$ and moves in a known static closed bounded environment $\workspace \subseteq \R^{\dimpos}$ in the $\dimpos$-dimensional Euclidean space $\R^{\dimpos}$ that is cluttered with a collection of obstacles represented by an open set $\obstspace \subset \R^{\dimpos}$, where $\dimpos \geq 2$.
Hence, the robot's free space, denoted by  $\freespace$, of collision-free robot positions is given by
\begin{align}\label{eq.FreeSpace}
\freespace:=\clist{\pos \in \workspace \, \Big|\, \ball(\pos, \radius) \subseteq \workspace \setminus \obstspace}
\end{align}
where  $\ball(\pos,\radius):=\clist{\pos' \in \R^{\dimpos} \big| \norm{\pos' - \pos} \leq \radius }$ is the closed Euclidean ball centered at $\pos$ with radius $\radius$, and $\norm{.}$ denotes the standard Euclidean norm for both vectors and matrices.
It is also convenient to have $\freespace_{\clearance}:= \clist{\pos \in \freespace \, \big | \, \ball(\pos, \clearance) \subseteq \freespace}$ representing the $\clearance$-clearance subset of the free space $\freespace$, where $\clearance > 0$ is a fixed positive safety margin.

We assume that the robot state, denoted by $\state=(\pos, \nonpos)$, consists of  a $\dimpos$-dimensional position variable  $\pos \in \R^{\dimpos}$ and  a $\dimnonpos$-dimensional nonpositional (e.g., velocity or orientation) variable $\nonpos \in \R^{\dimnonpos}\!$, and the equation of motion can be written~as
\begin{align}\label{eq.RobotDynamics}
\dot{\state} = \begin{bmatrix} \dot{\pos} \\ \dot{\nonpos} \end{bmatrix} = \dyn(\state, \ctrl)
\end{align}
where $\dyn: \R^{\dimpos+\dimnonpos} \times \R^{\dimctrl} \rightarrow \R^{\dimpos+\dimnonpos}$ represents the state-space robot dynamics and  $\ctrl \in \R^\dimctrl$ is a $\dimctrl$-dimensional control input.
For example, for the fully actuated second-order robot model, the nonpositional state variable is the robot velocity $\nonpos = \dot{\pos}$, the control input  $\ctrl$ corresponds to the robot acceleration $\ddot{\pos} = \ctrl$,  and the robot dynamics function is $\dyn(\state, \ctrl) = \begin{bmatrix}
\dot{\pos} \\
\ctrl
\end{bmatrix}$; 
for the 2D kinematic unicycle robot model, the nonpositional state variable is the robot orientation angle $\nonpos = \orient \in \R$, the control inputs $\ctrl=(\linvel, \angvel) \in \R^2$ are the linear speed $v \in \R$ and the angular speed $\omega\in \R$, and the unicycle dynamics function is given by 
$\vect{f}(\state, \ctrl) = \scalebox{0.9}{$\begin{bmatrix}
\linvel \cos \orient \\
\linvel \sin \orient \\
\angvel
\end{bmatrix}$} $.

For path-following control, we consider a Lipschitz-conti-nuous collision-free reference path $\refpath(\pathparam):[\minpathparam, \maxpathparam] \rightarrow \freespace_{\clearance}$  in the $\clearance$-clearance\footnote{Here, we require a minimum path clearance because path-following control near collisions is difficult not only in practice but also in theory.} free space $\freespace_{\clearance}$, parametrized by a~scalar path parameter $\pathparam \in \blist{\minpathparam, \maxpathparam} \subseteq \R$, which can be either automatically generated by a standard off-the-shelf path planner \cite{choset_etal_PrinciplesOfRobotMotion2005} or manually specified by the user such that $\refpath(\pathparam) \in \freespace_{\clearance}$ for all $\pathparam \in  [\minpathparam, \maxpathparam]$.
Assuming a first-order path parameter dynamic that allows for controlling the time rate of change of path parameter, we define  the problem of path-following control with feedback path time-parametrization as follows.

\begin{definition}\label{def.PathFollowingControl}
\emph{(Simultaneous Path Time-Parametrization and Path-Following Control)} For any given Lipschitz-continuous collision-free reference path $\refpath(\pathparam)\!:\! \blist{\minpathparam, \maxpathparam} \rightarrow \freespace_{\clearance}$, \emph{integrated path-following control with feedback path time-parametrization} is the design of a Lipschitz-continuous first-order path parameter dynamic $\pathdyn(\state, \pathparam)$ and a Lipshitz-continuous path-following control policy $\pathctrl_{\refpath}(\state, \pathparam, \dot{\pathparam})$ for the robot dynamics in \refeq{eq.RobotDynamics} of the form 
\begin{subequations}\label{eq.PathParametrizationFollowingControl}
\begin{align}
\dot{\pathparam} &= \pathdyn(\state, \pathparam) \label{eq.PathParameterDynamics}\\
\dot{\state} &= \dyn(\state, \pathctrl_{\refpath}(\state, \pathparam, \dot{\pathparam})) \label{eq.PathFollowingControl}
\end{align}
\end{subequations}
such that 
\begin{enumerate}[i)]
\item \emph{(Parameter Monotonicity)} 
the path parameter dynamic $\pathdyn(\state, \pathparam)$ is nonnegative (i.e., $\pathdyn(\state, \pathparam) \geq 0$ for all $\state \in \R^{\dimpos + \dimnonpos}$ and $\pathparam \in  \blist{\minpathparam, \maxpathparam}$) and leaves $\blist{\minpathparam, \maxpathparam}$ positively invariant (i.e., $\pathdyn(\state, \maxpathparam) = 0$ for all $\state \in \R^{\dimpos + \dimnonpos}$); that is to say, the path parameter $\pathparam$ is monotone increasing and bounded over $\blist{\minpathparam, \maxpathparam}$,
\item \emph{(Point Stabilization)} the path-following control policy $\pathctrl_{\refpath}(\state,\pathparam, 0)$ is globally asymptotically stable at $\refpath(\pathparam)$ for any constant path parameter $\pathparam \in [\minpathparam, \maxpathparam] $ with  $\dot{\pathparam} = 0$; that is to say, starting at $t = 0$ from any initial robot state $\state_{0} \in \R^{\dimpos + \dimnonpos}$ and path parameter $\pathparam_{0} \in [\minpathparam, \maxpathparam]$, the robot position trajectory $\pos(t)$ of \refeq{eq.RobotDynamics}  under $\pathctrl_{\refpath}(\state,\pathparam_0, 0)$  satisfies $\lim_{t \rightarrow \infty} \pos(t) = \refpath(\pathparam_0)$.
\end{enumerate}
\end{definition}

It is important to note that if the path parameter dynamic $\pathdyn(\state, \pathparam)$ in \refeq{eq.PathParameterDynamics}  asymptotically converges to the maximum path parameter $\maxpathparam$, then this also ensures (without any safety guarantees) that the point-stabilizing path-following control policy $\pathctrl_{\refpath}(\state, \pathparam, \dot{\pathparam})$ asymptotically brings the robot position $\pos(t)$ to the end of the reference path $\refpath(\pathparam)$, i.e., $\lim_{t \rightarrow \infty} \pathparam(t) = \maxpathparam$ implies $\lim_{t \rightarrow \infty} \pos(t) = \refpath(\maxpathparam)$.  
In \refdef{def.PathFollowingControl}, a successful completion of a reference path (i.e., asymptomatically reaching to the end of a reference path) is not required,  because path-following control with prioritized  obstacle avoidance for safety often fails to satisfy such a requirement \cite{lapierre_zapata_lepinay_IJRR2007, mohan_etal_Access2020}.  
To successfully complete a path-following task, as an alternative to an open-loop constant speed profile $\pathdyn(\state, \pathparam) = \pathparamrate$  at a fixed desired path parameter speed $\pathparamrate > 0$ \cite{koh_cho_JIRS1999}, a common choice of feedback path time-parametrization uses a smoothly saturated path-following error (e.g., the distance between  the robot position $\pos$ and the reference path point $\refpath(\pathparam)$) via a sigmoid (e.g. tangent hyperbolic $\tanh$) function as\reffn{fn.StableMonotoneTimeParametrization} \cite{paliotta_etal_TCST2019, mohan_etal_Access2020}
\begin{align}\label{eq.ExamplePathParameterDynamics}
\pathdyn(\state, \pathparam) = \pathparamrate \plist{1 - \eta_{\pathparam} \tanh\norm{\pos - \refpath(\pathparam)}}
\end{align}
so that the progress of path parameter is adaptively adjusted in order to keep the path-following error bounded, where  $\eta_{\pathparam} \in [0,1]$ is a constant saturation magnitude.
In addition to achieving a certain level of path-following performance, such a feedback path time-parametrization approach in \refeq{eq.ExamplePathParameterDynamics} often use  the bounded path-following error as a safety heuristic (without any formal guarantees) based on the  assumption that the reference path has enough clearance from obstacles. 

\addtocounter{footnote}{1}
\footnotetext{\label{fn.StableMonotoneTimeParametrization}Any nonnegative path parameter dynamic $\pathdyn(\state, \pathparam)$, for example, \refeq{eq.ExamplePathParameterDynamics}, can be modified while preserving Lipschitz continuity and nonnegativity as 
\begin{align}
\hat{\pathdyn}(\state, \pathparam) = \min \plist{\pathdyn(\state, \pathparam), -\gain_\pathparam(\pathparam - \maxpathparam)} \nonumber
\end{align}
so that $\hat{\pathdyn}(\state, \pathparam)$ leaves $[\minpathparam, \maxpathparam]$ positively invariant and is globally stable at $\pathparam = \maxpathparam$, where $\gain_\pathparam >0$ is a constant control gain. 
}

Ideally, path-following control of a robot around obstacles should ensure robot safety at all times, as well as a successful completion of a reference path, if possible.
Accordingly, given a point-stabilizing path-following control (\refdef{def.PathFollowingControl}), in this paper, we consider the design of a time governor for provably correct and safe path-following control, as described below. 
\begin{definition}\label{def.TimeGovernor} 
\emph{(Time Governors for Safe Path-Following Control)}
Given a Lipschitz-continuous $\clearance$-clearance reference path $\refpath(\pathparam):[\minpathparam, \maxpathparam] \rightarrow \freespace_{\clearance}$ in the free space $\freespace$ and an associated Lipschitz-continuous point-stabilizing path-following control $\pathctrl_{\refpath}(\state, \pathparam, \dot{\pathparam})$ for the robot dynamics in \refeq{eq.RobotDynamics}, a \emph{time governor for safe path-following control} is the design of a first-order nonnegative path parameter dynamic 
\begin{align}
\dot{\pathparam} &= \pathdyn(\state, \pathparam) \geq 0
\end{align}
such that, starting at time $t=0$ from $\pathparam(0) = \minpathparam$ and some state $\state(0) \in \R^{\dimpos + \dimnonpos}$ with\footnote{Here, any initial robot state that results in safe robot motion towards the initial reference path point can be considered as in Propositions \ref{prop.Safety} \& \ref{prop.Convergence}.} $\pos(0) = \refpath(\minpathparam)$,  the path parameter trajectory $\pathparam(t)$ and the robot position trajectory $\pos(t)$ of the closed-loop robot dynamics in \refeq{eq.PathFollowingControl} under path following control $\pathctrl_{\refpath}(\state, \pathparam, \dot{\pathparam})$ asymptotically converge to the end of the reference path $\refpath(\pathparam)$ with no collision between the robot and obstacles along the way, i.e.,
\begin{subequations}
\begin{align}
\pos(t) &\in \freespace \quad \forall t \geq 0, \\  
\lim_{t \rightarrow \infty} \pathparam(t) &= \maxpathparam, \\
\lim_{t \rightarrow \infty} \pos(t) &= \refpath(\maxpathparam).
\end{align}
\end{subequations}
\end{definition}

It is useful to observe  that \refdef{def.TimeGovernor} allows one to systematically separate and handle stability, safety and task constraints at different levels for robot motion planning and control:  the stability of robot motion is ensured by low-level path-following control, the safety of robot motion is handled by a time governor, and the task requirements are resolved by a high-level reference path~planner, as illustrated in \mbox{\reffig{fig.TimeGovernor}}.

%%%%%%%%%%%%%%%%%%%%%%%%%%%%%%%%%%%%%%%%%%%%%%%%%%%%%%%%%%%%%%%%%%%%%%%%%%%%%%%%
%%%%%%%%%%%%%%%%%%%%%%%%%%%%%%%%%%%%%%%%%%%%%%%%%%%%%%%%%%%%%%%%%%%%%%%%%%%%%%%%
\section{Time Governors for Safe Path Following: General Framework}
\label{sec.TimeGovernorsGeneralFramework}
%%%%%%%%%%%%%%%%%%%%%%%%%%%%%%%%%%%%%%%%%%%%%%%%%%%%%%%%%%%%%%%%%%%%%%%%%%%%%%%%
%%%%%%%%%%%%%%%%%%%%%%%%%%%%%%%%%%%%%%%%%%%%%%%%%%%%%%%%%%%%%%%%%%%%%%%%%%%%%%%%

In this section we describe the generic building blocks of our time governor framework for provably correct and safe path-following control, illustrated in \reffig{fig.TimeGovernor}, and present its important safety and convergence properties.

%%%%%%%%%%%%%%%%%%%%%%%%%%%%%%%%%%%%%%%%%%%%%%%%%%%%%%%%%%%%%%%%%%%%%%%%%%%%%%%%
%%%%%%%%%%%%%%%%%%%%%%%%%%%%%%%%%%%%%%%%%%%%%%%%%%%%%%%%%%%%%%%%%%%%%%%%%%%%%%%%
\subsection{Fundamental Elements for Safe Path-Following Control} 
%%%%%%%%%%%%%%%%%%%%%%%%%%%%%%%%%%%%%%%%%%%%%%%%%%%%%%%%%%%%%%%%%%%%%%%%%%%%%%%%
%%%%%%%%%%%%%%%%%%%%%%%%%%%%%%%%%%%%%%%%%%%%%%%%%%%%%%%%%%%%%%%%%%%%%%%%%%%%%%%%

Our time governor framework for safe path planning and path following control consists of five building elements: reference path planner, path-following control, feedback motion prediction, motion safety assessment, and a time governor, whose specific roles and requirements are presented below. 

%%%%%%%%%%%%%%%%%%%%%%%%%%%%%%%%%%%%%%%%%%%%%%%%%%%%%%%%%%%%%%%%%%%%%%%%%%%%%%%%
%%%%%%%%%%%%%%%%%%%%%%%%%%%%%%%%%%%%%%%%%%%%%%%%%%%%%%%%%%%%%%%%%%%%%%%%%%%%%%%%
\subsubsection{Reference Path Planner}
%%%%%%%%%%%%%%%%%%%%%%%%%%%%%%%%%%%%%%%%%%%%%%%%%%%%%%%%%%%%%%%%%%%%%%%%%%%%%%%%
%%%%%%%%%%%%%%%%%%%%%%%%%%%%%%%%%%%%%%%%%%%%%%%%%%%%%%%%%%%%%%%%%%%%%%%%%%%%%%%%

For any given pair of collision-free start and goal positions $\pos_{\mathrm{start}}, \pos_{\mathrm{goal}} \in \freespace_{\clearance}$, if exists, a \emph{reference path planner} generates a Lipschitz-continuous collision-free reference path \mbox{$\refpath(\pathparam): [\minpathparam, \maxpathparam] \rightarrow \freespace_{\clearance}$} in the robot's free space $\freespace$ such that   $\refpath(\minpathparam) = \pos_{\mathrm{start}}$, $\refpath(\maxpathparam) = \pos_{\mathrm{goal}}$, and the reference path $\refpath(\pathparam)$ has an $\epsilon$-clearance from the free space boundary $\partial \freespace$, i.e., $\dist_{\freespace}(\refpath([\minpathparam, \maxpathparam])) \geq  \epsilon $, where $\epsilon>0$ is a constant positive safety margin, and the collision distance $\dist_{\freespace}(\mathcal{A})$ of  set $\mathcal{A} \subseteq \R^{\dimpos}$ to  the free space boundary $\partial \freespace$ is defined as
\begin{align} \label{eq.Distance2Collision}
\dist_{\freespace}(\mathcal{A}) := \left \{ 
\begin{array}{@{}c@{}l@{}}
\min\limits_{\substack{\vect{a} \in \mathcal{A} \\ \vect{b} \in \partial \freespace}} \norm{\vect{a} - \vect{b}} & \text{, if } \mathcal{A} \subseteq \freespace, \\
0 & \text{, otherwise.}
\end{array}
\right.
\end{align}    
Here, a collision distance of zero means unsafe; and the higher the distance-to-collision is the safer. 
Note that we consider being exactly on the boundary of the free space to be unsafe although it is, by definition in \refeq{eq.FreeSpace}, free of collisions.

%%%%%%%%%%%%%%%%%%%%%%%%%%%%%%%%%%%%%%%%%%%%%%%%%%%%%%%%%%%%%%%%%%%%%%%%%%%%%%%%
%%%%%%%%%%%%%%%%%%%%%%%%%%%%%%%%%%%%%%%%%%%%%%%%%%%%%%%%%%%%%%%%%%%%%%%%%%%%%%%%
\subsubsection{Path-Following Control}
\label{sec.PathFollowingControl}
%%%%%%%%%%%%%%%%%%%%%%%%%%%%%%%%%%%%%%%%%%%%%%%%%%%%%%%%%%%%%%%%%%%%%%%%%%%%%%%%
%%%%%%%%%%%%%%%%%%%%%%%%%%%%%%%%%%%%%%%%%%%%%%%%%%%%%%%%%%%%%%%%%%%%%%%%%%%%%%%%

A path-following control policy $\pathctrl_{\refpath}(\state, \pathparam, \dot{\pathparam}): \R^{\dimpos + \dimnonpos} \times [\minpathparam, \maxpathparam] \times \R \rightarrow \R^{\dimctrl}$, associated with a reference path $\refpath(\pathparam):[\minpathparam, \maxpathparam] \rightarrow \R^{\dimpos}$,  is a Lipschitz-continuous point-stabilizing  control law  for the robot dynamics in \refeq{eq.RobotDynamics} (see \refdef{def.PathFollowingControl})  such that the closed-loop robot motion under path-following control $\pathctrl_{\refpath}(\state, \pathparam, 0)$ is globally asymptomatically stable at position $\refpath(\pathparam)$  for any  path parameter $\pathparam \in [\minpathparam, \maxpathparam]$ and $\dot{\pathparam} = 0$.
For example, let $\navctrl_{\goal}(\state)\!:\!\R^{\dimpos + \dimnonpos} \!\rightarrow\! \R^{\dimctrl}$ be a globally stable navigation controller for the robot model in \refeq{eq.RobotDynamics} that asymptomatically brings all robot states $\state \!\in\! \R^{\dimpos + \dimnonpos}$ to a given goal position $\goal \!\in \!\R^{\dimpos}$, then one can construct a point-stabilizing path-following controller $\pathctrl_{\refpath}(\state, \pathparam, \dot{\pathparam})$ using the globally stable navigation controller $\navctrl_{\goal}(\state)$ as $\pathctrl_{\refpath}(\state, \pathparam, \dot{\pathparam}) := \navctrl_{\refpath(\pathparam)}(\state)$.
Moreover, one can also use the path parameter rate $\dot{\pathparam}$ in path-following control design to improve path-following performance, as in path-tracking control\reffn{fn.PathFollowingPathTracking}. 
In general, as a choice of point-stabilizing path-following control, one may consider any preferred feedback motion controller for the robot dynamics of interest as long as the resulting feedback robot motion can be accurately predicted in terms of the robot state $\state$ and the reference path point~$\refpath(\pathparam)$, as described below.

%%%%%%%%%%%%%%%%%%%%%%%%%%%%%%%%%%%%%%%%%%%%%%%%%%%%%%%%%%%%%%%%%%%%%%%%%%%%%%%%
%%%%%%%%%%%%%%%%%%%%%%%%%%%%%%%%%%%%%%%%%%%%%%%%%%%%%%%%%%%%%%%%%%%%%%%%%%%%%%%%
\subsubsection{Feedback Motion Prediction of Path-Following Control}
\label{sec.FeedbackMotionPrediction}
%%%%%%%%%%%%%%%%%%%%%%%%%%%%%%%%%%%%%%%%%%%%%%%%%%%%%%%%%%%%%%%%%%%%%%%%%%%%%%%%
%%%%%%%%%%%%%%%%%%%%%%%%%%%%%%%%%%%%%%%%%%%%%%%%%%%%%%%%%%%%%%%%%%%%%%%%%%%%%%%%

Feedback motion prediction of an autonomous robotic system operating under a specific control policy is finding a motion set (e.g., a positively invariant Lyapunov level set) that contains the closed-loop robot motion trajectory starting from a given initial robot state  \cite{isleyen_vandewouw_arslan_RAL2022, isleyen_vandewouw_arslan_arXiv2022, arslan_isleyen_2022}.
For path-following control with feedback path-time parametrization, constructing a feedback motion prediction algorithm that accurately describes the entire path-following motion of a robotic system is difficult, because the robot's path-following motion strongly depends on the reference path profile and its time parametrization. 
Hence, we below consider the local feedback motion prediction of path-following control using the point stabilization property.

By assuming that the path parameter $\pathparam \in [\minpathparam, \maxpathparam]$ would be kept constant, i.e. $\dot{\pathparam} = 0$, for instance, to ensure robot safety, a feedback motion prediction set, denoted by $\motionset_{\pathctrl_{\refpath}}(\state, \pathparam) $, of a Lipschitz-continuous point-stabilizing  path-following control policy $\pathctrl_{\refpath}(\state, \pathparam, \dot{\pathparam})$, associated with a Lipschitz-continuous reference path $\refpath(\pathparam):[\minpathparam, \maxpathparam] \rightarrow \R^{\dimpos}$, is defined to be a closed set that contains the closed-loop position trajectory $\pos(t)$ of the robot model in \refeq{eq.RobotDynamics} under the control law $\pathctrl_{\refpath}(\state, \pathparam, 0)$ for all future times, i.e., starting at $t=0$ from an initial state $\state(0) \in \R^{\dimpos+\dimnonpos}$ 
\begin{align}
\pos(t) \in \motionset_{\pathctrl_{\refpath}}(\state(0), \pathparam) \quad \forall t \geq 0
\end{align} 
such that its radius $\setradius_{\refpath(\pathparam)}\!\plist{\motionset_{\pathctrl_{\refpath}}\!(\state, \pathparam)\!}$ relative to the reference path point $\refpath(\pathparam)$ is Lipschitz continuous with respect to robot state $\state$ and path parameter $\pathparam$, and  asymptotically converges to zero along the closed-loop robot state trajectory $\state(t)$, i.e., 
\begin{align}\label{eq.DecayingMotionPredictionRadius}
\lim_{t \rightarrow \infty} \setradius_{\refpath(\pathparam)}\plist{\motionset_{\pathctrl_{\refpath}}(\state(t), \pathparam)} = 0
\end{align}
where  the radius of set $\mathcal{A} \subseteq \R^{\dimpos}$ relative to point $\vect{b} \in \R^{\dimpos}$  is defined as 
\begin{align}\label{eq.SetRadius}
\setradius_{\vect{b}}(\mathcal{A}) := \max_{\vect{a} \in \mathcal{A}} \norm{\vect{a} - \vect{b}}.
\end{align}
Note that the Lipschitz continuity of motion prediction radius $\setradius_{\refpath(\pathparam)}\!\plist{\motionset_{\pathctrl_{\refpath}}\!(\state, \pathparam)\!}$ with respect to the robot state $\state$ implies that the motion prediction set $\motionset_{\pathctrl_{\refpath}}(\state, \pathparam)$ is radially bounded relative to a stable robot state $(\refpath(\pathparam), \nonpos)$ under the control law $\pathctrl_{\refpath}(\state, \pathparam,0)$ as\footnote{It follows from the the Lipschitz continuity  of $\setradius_{\refpath(\pathparam)}\!\plist{\motionset_{\pathctrl_{\refpath}}\!(\state, \pathparam)\!}$ that $\absval{\setradius_{\refpath(\pathparam)}\!\plist{\motionset_{\pathctrl_{\refpath}}\!(\state, \pathparam)\!} - \setradius_{\refpath(\pathparam)}\!\plist{\motionset_{\pathctrl_{\refpath}}\!((\refpath(\pathparam), \nonpos), \pathparam)\!}} \leq \eta \norm{\state - \!(\refpath(\pathparam), \nonpos)}$ where $\setradius_{\refpath(\pathparam)}\!\plist{\motionset_{\pathctrl_{\refpath}}\!((\refpath(\pathparam), \nonpos), \pathparam)\!} =0$ because  the motion prediction radius is asymptotically decaying to zero and $(\refpath(\pathparam), \nonpos)$ is stable under $\pathctrl_{\refpath}(\state, \pathparam, 0)$. 
Hence, $\setradius_{\refpath(\pathparam)}\!\plist{\motionset_{\pathctrl_{\refpath}}(\state, \pathparam)\!} \leq \eta \norm{\state - (\refpath(\pathparam), \nonpos)}$ is equivalent to saying that $\motionset_{\pathctrl_{\refpath}}\!(\state, \pathparam)\! \subseteq\! \ball(\refpath(\pathparam), \eta \norm{\state - (\refpath(\pathparam), \nonpos)})$.
Overall, the Lipschitz-continuity of motion prediction radius is an elegant way of stating the radial boundedness of  motion prediction without referring to a stable robot state \cite{isleyen_vandewouw_arslan_RAL2022}.
} 
\mbox{$\motionset_{\pathctrl_{\refpath}}(\state, \pathparam) \subseteq \ball(\refpath(\pathparam), \eta \norm{\state - (\refpath(\pathparam), \nonpos)})$}, where $\eta$ is the associated Lipschitz constant. 
Moreover, it also follows from the asymptotic decay property of the motion prediction radius in \refeq{eq.DecayingMotionPredictionRadius} that the motion prediction set asymptotically converges to the single point at the reference point $\refpath(\pathparam)$, i.e., $\lim_{t \rightarrow \infty} \motionset_{\pathctrl_{\refpath}}(\state(t), \pathparam) = \clist{\refpath(\pathparam)}$, because the feedback motion control $\pathctrl_{\refpath}(\state, \pathparam,0)$ is globally asymptotically stable at position $\refpath(\pathparam)$ (see \refdef{def.PathFollowingControl}).

Hence, feedback motion prediction enable us to predict the path-following performance of a robot.
\begin{corollary}
Since a feedback motion prediction set $\motionset_{\pathctrl_{\refpath}}(\state, \pathparam)$ contains both the robot position $\pos$ and the reference path point $\refpath(\pathparam)$, the distance of the robot to the reference path is bounded by the radius of motion prediction set, i.e.,
\begin{align}
\norm{\pos - \refpath(\pathparam)} \leq \setradius_{\refpath(\pathparam)}\plist{\motionset_{\pathctrl_{\refpath}}(\state, \pathparam)}.
\end{align}
Moreover, if the path parameter is kept constant, i.e., $\dot{\pathparam} =0$, the robot position trajectory $\pos(t)$ under the path-following control $\pathctrl_{\refpath}(\state, \pathparam, 0)$ has a bounded path-following error in terms of the motion prediction radius as
\begin{align}
\norm{\pos(t) - \refpath(\pathparam)} \leq \setradius_{\refpath(\pathparam)}\plist{\motionset_{\pathctrl_{\refpath}}(\state(0), \pathparam)} \quad \forall t \geq 0.
\end{align}
\end{corollary}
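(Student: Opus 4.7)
The plan is to treat this as a direct unpacking of the feedback motion prediction definition together with the set radius definition in \refeq{eq.SetRadius}, so the proof will be short and geometric rather than analytic. The two inequalities share the same underlying fact: both the current robot position and the reference path point belong to the motion prediction set, so their distance is bounded above by the diameter-like quantity $\setradius_{\refpath(\pathparam)}(\cdot)$.

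First, I would verify the two membership claims asserted in the corollary statement. Membership of $\pos$ in $\motionset_{\pathctrl_{\refpath}}(\state, \pathparam)$ follows by instantiating the motion prediction property at $t = 0$: starting from state $\state(0) = \state$, the trajectory $\pos(t)$ lies in $\motionset_{\pathctrl_{\refpath}}(\state, \pathparam)$ for all $t \geq 0$, and in particular $\pos(0) = \pos$ is in the set. Membership of $\refpath(\pathparam)$ uses the point stabilization property from \refdef{def.PathFollowingControl}: under $\pathctrl_{\refpath}(\state, \pathparam, 0)$ with $\dot{\pathparam} = 0$, the closed-loop trajectory satisfies $\lim_{t \to \infty} \pos(t) = \refpath(\pathparam)$. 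Since the motion prediction set is closed and contains every $\pos(t)$, it must contain this limit point as well.

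Next, I would invoke the set radius definition \refeq{eq.SetRadius} centered at $\refpath(\pathparam)$. By definition, for every point $\vect{a} \in \motionset_{\pathctrl_{\refpath}}(\state, \pathparam)$ one has $\norm{\vect{a} - \refpath(\pathparam)} \leq \setradius_{\refpath(\pathparam)}\plist{\motionset_{\pathctrl_{\refpath}}(\state, \pathparam)}$. Specializing to $\vect{a} = \pos$ yields the first inequality of the corollary. For the trajectory bound, I would apply the exact same reasoning to the initial condition $\state(0)$: for each $t \geq 0$, the membership $\pos(t) \in \motionset_{\pathctrl_{\refpath}}(\state(0), \pathparam)$ follows from the motion prediction property, and then \refeq{eq.SetRadius} immediately gives the uniform-in-$t$ bound $\norm{\pos(t) - \refpath(\pathparam)} \leq \setradius_{\refpath(\pathparam)}\plist{\motionset_{\pathctrl_{\refpath}}(\state(0), \pathparam)}$.

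The main subtlety, and effectively the only nontrivial step, is justifying that $\refpath(\pathparam)$ actually lies in the prediction set rather than merely being an accumulation point outside it. This is why closedness of $\motionset_{\pathctrl_{\refpath}}$ (imposed in the definition in \refsec{sec.FeedbackMotionPrediction}) combined with point stabilization is essential; without either, one would only get a supremum instead of a maximum in \refeq{eq.SetRadius}, and the bound would need to be stated as a supremum or as a strict inequality limit. Once this point is made, the rest is a one-line chain of inclusions, so I do not expect any genuine obstacle in the argument.
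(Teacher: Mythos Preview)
Your proposal is correct and is exactly the natural unpacking of the corollary. The paper gives no separate proof here: it treats the result as immediate, with the entire justification packed into the ``Since \ldots\ contains both the robot position $\pos$ and the reference path point $\refpath(\pathparam)$'' clause of the statement. Your argument supplies the details the paper leaves implicit, in particular the observation that $\refpath(\pathparam) \in \motionset_{\pathctrl_{\refpath}}(\state,\pathparam)$ because the set is closed and $\pos(t) \to \refpath(\pathparam)$ by point stabilization; this is a genuine point worth making explicit, and nothing in your reasoning deviates from what the paper intends.
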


More importantly, feedback motion prediction allows  one to effectively identify safe robot states of complex robotic systems (e.g., a challenge of kinodynamic planning of highly dynamic robots \cite{lavalle_kuffner_IJRR2001}), because having the motion prediction set in the free space implies safe robot motion \cite{isleyen_vandewouw_arslan_RAL2022, isleyen_vandewouw_arslan_arXiv2022}.
\begin{corollary}
When the path parameter is kept constant, i.e., $\dot{\pathparam} =0$, a robot state $\state \in \R^{\dimpos + \dimnonpos}$ of the robot model in \refeq{eq.RobotDynamics} under path-following control $\pathctrl_{\refpath}(\state, \pathparam, 0)$ is safe if there exists a path parameter $\pathparam \in [\minpathparam, \maxpathparam]$ such that the associated motion prediction $\motionset_{\pathctrl_{\refpath}}(\state, \pathparam)$ is in the free space $\freespace$, because the robot position trajectory $\pos(t)$ starting at $t = 0$ from the robot state $\state$  satisfies $ \pos(t) \in \motionset_{\pathctrl_{\refpath}}(\state, \pathparam) \subseteq \freespace$ for all $ t \geq 0$.
\end{corollary}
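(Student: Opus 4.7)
The plan is to prove this corollary by a direct invocation of the defining property of the feedback motion prediction set from \refsec{sec.FeedbackMotionPrediction}, since the claim is essentially a straight unpacking of that definition together with the notion of safety encoded by $\freespace$.

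First I would fix the initial state $\state(0) = \state$ and the constant path parameter $\pathparam \in [\minpathparam, \maxpathparam]$ for which, by hypothesis, $\motionset_{\pathctrl_{\refpath}}(\state, \pathparam) \subseteq \freespace$. Since $\dot{\pathparam} = 0$, the closed-loop robot dynamics reduce to $\dot{\state} = \dyn(\state, \pathctrl_{\refpath}(\state, \pathparam, 0))$ with $\pathparam$ acting as a constant parameter, so the point-stabilization assumption in \refdef{def.PathFollowingControl} applies and the trajectory $\state(t)$ is well defined for all $t \geq 0$ by Lipschitz continuity of both $\dyn$ and $\pathctrl_{\refpath}$.

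Next I would apply the defining containment property of feedback motion prediction from \refsec{sec.FeedbackMotionPrediction}: the closed-loop position trajectory $\pos(t)$ starting from $\state(0) = \state$ under $\pathctrl_{\refpath}(\state, \pathparam, 0)$ satisfies
\begin{align}
\pos(t) \in \motionset_{\pathctrl_{\refpath}}(\state, \pathparam) \quad \forall t \geq 0.
\end{align}
Combining this with the hypothesis $\motionset_{\pathctrl_{\refpath}}(\state, \pathparam) \subseteq \freespace$ immediately yields $\pos(t) \in \freespace$ for all $t \geq 0$, which by \refeq{eq.FreeSpace} means that the robot disk $\ball(\pos(t), \radius)$ remains disjoint from the obstacle set $\obstspace$ and inside the workspace $\workspace$ at all times, i.e., the robot state $\state$ is safe in the sense required.

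There is essentially no hard step here: the corollary is a two-line consequence of the definition of $\motionset_{\pathctrl_{\refpath}}$ and of $\freespace$. The only mild subtlety worth flagging in the write-up is that safety should be interpreted in the trajectory sense (no collision along the entire future motion from $\state$), not merely pointwise at $t=0$; making this reading explicit is the one place where a careless reader could be confused, and it is worth one sentence to state it clearly before closing the argument.
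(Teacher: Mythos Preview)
Your proposal is correct and matches the paper's own justification, which is embedded directly in the corollary statement via the ``because'' clause rather than given as a separate proof. In both cases the argument is simply the defining containment property of $\motionset_{\pathctrl_{\refpath}}(\state,\pathparam)$ from \refsec{sec.FeedbackMotionPrediction} together with the hypothesis $\motionset_{\pathctrl_{\refpath}}(\state,\pathparam)\subseteq\freespace$.
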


Accordingly, we below continuously monitor the safety of the closed-loop robot motion under path-following control by measuring the distance of the motion prediction set to the free space boundary, and adjust (if needed, stop) the progress of path parameter to ensure safe path-following control.

%%%%%%%%%%%%%%%%%%%%%%%%%%%%%%%%%%%%%%%%%%%%%%%%%%%%%%%%%%%%%%%%%%%%%%%%%%%%%%%%
%%%%%%%%%%%%%%%%%%%%%%%%%%%%%%%%%%%%%%%%%%%%%%%%%%%%%%%%%%%%%%%%%%%%%%%%%%%%%%%%
\subsubsection{Safety Assessment via Feedback Motion Prediction}
%%%%%%%%%%%%%%%%%%%%%%%%%%%%%%%%%%%%%%%%%%%%%%%%%%%%%%%%%%%%%%%%%%%%%%%%%%%%%%%%
%%%%%%%%%%%%%%%%%%%%%%%%%%%%%%%%%%%%%%%%%%%%%%%%%%%%%%%%%%%%%%%%%%%%%%%%%%%%%%%%

Given a feedback motion prediction set $\motionset_{\pathctrl_{\refpath}}(\state, \pathparam)$ for the robot model in \refeq{eq.RobotDynamics} under path-following control $\pathctrl_{\refpath}(\state, \pathparam, \dot{\pathparam})$, the safety level $\safetylevel(\state, \pathparam)$ of the robot's path-following motion starting from state $\state$ towards the reference path point $\refpath(\pathparam)$ is measured by the minimum distance of the predicted robot motion set $\motionset_{\pathctrl_{\refpath}}(\state, \pathparam)$ to the free space boundary $\partial \freespace$ as
\begin{align}
\safetylevel(\state,\pathparam):= \dist_{\freespace}\plist{\motionset_{\pathctrl_{\refpath}}(\state, \pathparam)}
\end{align}   
where the collision distance $\dist_{\freespace}$ is defined as in \refeq{eq.Distance2Collision}.

A technical requirement of the safety level measure (and so feedback motion prediction) for stability analysis is that $\safetylevel(\state, \pathparam)$ is a locally Lipschitz continuous function of the robot state $\state$ and the path parameter $\pathparam$. 
For example, for any Lipschitz-continuous reference path $\refpath(\pathparam)$, the safety level $\safetylevel(\state, \pathparam)$ is locally Lipschitz continuous if the motion prediction set $\motionset_{\pathctrl_{\refpath}}(\state, \pathparam)$ can be expressed as an affine transformation of some fixed sets (e.g., the unit ball/simplex) that is a smooth function of the robot state $\state$ and the reference path point $\refpath(\pathparam)$ (see Lemma 1 in \cite{isleyen_vandewouw_arslan_RAL2022}).

%%%%%%%%%%%%%%%%%%%%%%%%%%%%%%%%%%%%%%%%%%%%%%%%%%%%%%%%%%%%%%%%%%%%%%%%%%%%%%%%
%%%%%%%%%%%%%%%%%%%%%%%%%%%%%%%%%%%%%%%%%%%%%%%%%%%%%%%%%%%%%%%%%%%%%%%%%%%%%%%%
\subsubsection{Time Governor for Safe Path-Following Control}
%%%%%%%%%%%%%%%%%%%%%%%%%%%%%%%%%%%%%%%%%%%%%%%%%%%%%%%%%%%%%%%%%%%%%%%%%%%%%%%%
%%%%%%%%%%%%%%%%%%%%%%%%%%%%%%%%%%%%%%%%%%%%%%%%%%%%%%%%%%%%%%%%%%%%%%%%%%%%%%%%
%
In accordance with \refdef{def.TimeGovernor}, using the safety level $\safetylevel(\state, \pathparam)$ of the predicted robot motion set $\motionset_{\pathctrl_{\refpath}}\!(\state, \pathparam)$ of path-following control $\pathctrl_{\refpath}(\state, \pathparam, \dot{\pathparam})$, we design a nonnegative time governor function $\pathdyn_{\safetylevel}(\state, \pathparam)$ that determines the time rate of change of the path parameter $\pathparam$ over $[\minpathparam, \maxpathparam]$ for any robot state $\state \in \R^{\dimpos + \dimnonpos}$  as\reffn{fn.StableMonotoneTimeParametrization}
\begin{align}\label{eq.SafeTimeGovernor}
\pathdyn_{\safetylevel}(\state, \pathparam) = \min\plist{ \gain_{\safetylevel}\dist_{\freespace}\plist{\motionset_{\pathctrl_{\refpath}}\!(\state, \pathparam)\!}, - \gain_{\pathparam}(\pathparam - \maxpathparam)\!} \!\!
\end{align}  
where $\gain_{\safetylevel}, \gain_{\pathparam} > 0$ are fixed positive control coefficients.
By construction, the time governor $\pathdyn_{\safetylevel}(\state, \pathparam)$ in \refeq{eq.SafeTimeGovernor} ensures that the path parameter $\pathparam$ is increasing if and only if the predicted  robot motion is safe, i.e., $\safetylevel(\state, \pathparam)= \dist\plist{\motionset_{\pathctrl_{\refpath}}(\state, \pathparam), \partial \freespace} >0 $. 
Also note that the time governor  $\pathdyn_{\safetylevel}(\state, \pathparam)$ is Lipschitz continuous since the safety level $\safetylevel(\state, \pathparam)$ is assumed to be Lipschitz and the minimum of Lipschitz continuous functions is again Lipschitz continuous  \cite{hager_JCO1979}.%
\footnote{\label{fn.MinimumLipschitzContinuity}The minimum of Lipschitz continuous functions are Lipschitz because a pair of scalar-valued functions $f$ and $g$  satisfy $\min(f,g) = \frac{f + g - \absval{f- g}}{2} $.}

%%%%%%%%%%%%%%%%%%%%%%%%%%%%%%%%%%%%%%%%%%%%%%%%%%%%%%%%%%%%%%%%%%%%%%%%%%%%%%%%
%%%%%%%%%%%%%%%%%%%%%%%%%%%%%%%%%%%%%%%%%%%%%%%%%%%%%%%%%%%%%%%%%%%%%%%%%%%%%%%%
\subsection{Safety and Convergence of Path-Following Control}
%%%%%%%%%%%%%%%%%%%%%%%%%%%%%%%%%%%%%%%%%%%%%%%%%%%%%%%%%%%%%%%%%%%%%%%%%%%%%%%%
%%%%%%%%%%%%%%%%%%%%%%%%%%%%%%%%%%%%%%%%%%%%%%%%%%%%%%%%%%%%%%%%%%%%%%%%%%%%%%%%

To prove the correctness of our time governor framework for safe path-following control, we below show  that the time governor design in \refeq{eq.SafeTimeGovernor} ensures that the robot motion under time-governed path-following control is always safe, and  both the path parameter and the robot position asymptotically reach to the end of a collision-free reference path.

\begin{proposition}\label{prop.Safety} \emph{(Safety)}
Starting at $t=0$ from any robot state $\state(0)\in \R^{\dimpos + \dimnonpos}$ and  path parameter 
\mbox{$\pathparam(0) \in  [\minpathparam, \maxpathparam]$} with  $\motionset_{\pathctrl_{\refpath}}\plist{\state(0), \pathparam(0)} \subseteq \freespace $, the position trajectory $\pos(t)$ of the robot model in \refeq{eq.RobotDynamics} under a Lipschitz-continuous point-stabilizing  path-following control policy $\pathctrl_{\refpath}(\state, \pathparam, \dot{\pathparam})$, associated with a Lipschitz-continuous reference path \mbox{$\refpath(\pathparam)\!:\![\minpathparam, \maxpathparam] \rightarrow \R^{\dimpos}$} and the Lipschitz-continuous time governor $\dot{\pathparam}\!=\! \pathdyn_{\safetylevel}(\state, \pathparam)$ in \refeq{eq.SafeTimeGovernor}, stays collision-free in the free space  $\freespace$ for all future~times, i.e.,
\begin{align}
\motionset_{\pathctrl_{\refpath}}\plist{\state(0), \pathparam(0)} \subseteq \freespace  \, \Longrightarrow \,  \pos(t) \in \freespace \quad \forall t \geq 0.
\end{align}    
\end{proposition}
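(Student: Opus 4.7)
The plan is to prove safety by establishing positive invariance of the condition that the predicted motion set lies inside the free space. First I would introduce the safety indicator $V(\state, \pathparam) \ldf \dist_{\freespace}\plist{\motionset_{\pathctrl_{\refpath}}(\state, \pathparam)}$ and observe that, by definition of the feedback motion prediction set, the current position is always contained in it, $\pos \in \motionset_{\pathctrl_{\refpath}}(\state, \pathparam)$, since the prediction is defined as the set containing the closed-loop trajectory under fixed $\pathparam$ starting from $\state$, whose initial point is precisely $\pos$. Therefore, if the closed set $\mathcal{S} \ldf \clist{(\state, \pathparam) : V(\state, \pathparam) \geq 0} = \clist{(\state, \pathparam) : \motionset_{\pathctrl_{\refpath}}(\state, \pathparam) \subseteq \freespace}$ is shown to be positively invariant under the coupled closed-loop dynamics given by \refeq{eq.RobotDynamics} with $\ctrl = \pathctrl_{\refpath}(\state, \pathparam, \dot{\pathparam})$ and $\dot{\pathparam} = \pathdyn_{\safetylevel}(\state, \pathparam)$, then $\pos(t) \in \freespace$ for all $t \geq 0$ follows immediately from the hypothesis that $(\state(0), \pathparam(0)) \in \mathcal{S}$.

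The core mechanism making $\mathcal{S}$ positively invariant is the coupling between the time governor in \refeq{eq.SafeTimeGovernor} and the defining property of feedback motion prediction. Whenever $(\state, \pathparam)$ lies on the boundary $\partial \mathcal{S} = \clist{V = 0}$, the first argument of the minimum in \refeq{eq.SafeTimeGovernor} vanishes, so $\pathdyn_{\safetylevel}(\state, \pathparam) = 0$ and the coupled system instantaneously collapses to the frozen-$\pathparam$ path-following subdynamics driven by $\pathctrl_{\refpath}(\state, \pathparam, 0)$. For this subdynamics the motion prediction set is by construction forward invariant, and its radius $\setradius_{\refpath(\pathparam)}\plist{\motionset_{\pathctrl_{\refpath}}(\state, \pathparam)}$ is asymptotically decaying by \refeq{eq.DecayingMotionPredictionRadius}. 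Consequently $V$ cannot strictly decrease across the boundary $\partial \mathcal{S}$, and the trajectory is either pushed back into $\mathcal{S}$ or remains tangent to $\partial \mathcal{S}$.

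To make this rigorous I would invoke a Nagumo-type viability theorem: since the joint vector field $(\dyn, \pathdyn_{\safetylevel})$ is Lipschitz and $\mathcal{S}$ is closed, it suffices to verify the tangent-cone condition at every boundary point of $\mathcal{S}$. At interior points ($V > 0$) the condition is trivial. At a boundary point where $V = 0$, one has $\dot{\pathparam} = 0$, so the state evolves under $\pathctrl_{\refpath}(\state, \pathparam, 0)$, and the forward invariance of $\motionset_{\pathctrl_{\refpath}}(\state, \pathparam)$ together with the Lipschitz-continuous asymptotic decay of its radius yields a non-negative lower Dini derivative $\underline{D}^{+} V \geq 0$, which is precisely the viability condition needed. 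An equivalent contradiction argument defines $t^* \ldf \inf\clist{t \geq 0 : V(t) < 0}$, uses Lipschitz continuity of $V$ to get $V(t^*) = 0$, and then applies the above tangency property to rule out an immediate further decrease, contradicting the definition of $t^*$.

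The main obstacle I anticipate is the non-smoothness of $V$: both $\dist_{\freespace}(\cdot)$ and the minimum in \refeq{eq.SafeTimeGovernor} are only Lipschitz, not differentiable, so $\dot{V}$ does not exist in the classical sense and one must work with Dini or Clarke derivatives, or argue directly through the set-containment property of $\motionset_{\pathctrl_{\refpath}}$ instead of pointwise derivative computations. A second, more subtle difficulty is that the motion prediction property is stated for \emph{constant} $\pathparam$, so transferring forward invariance from the frozen subdynamics to the actual coupled trajectory, in which $\pathparam$ may resume growing the instant $V$ becomes positive again, requires the Lipschitz dependence of $\setradius_{\refpath(\pathparam)}\plist{\motionset_{\pathctrl_{\refpath}}(\state, \pathparam)}$ on $(\state, \pathparam)$ noted in \refsec{sec.FeedbackMotionPrediction}, which guarantees that small variations in $(\state, \pathparam)$ produce only small variations of the prediction set and hence of $V$.
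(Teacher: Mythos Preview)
Your core mechanism matches the paper's: when $\dist_{\freespace}(\motionset_{\pathctrl_{\refpath}}(\state,\pathparam))=0$ the governor \refeq{eq.SafeTimeGovernor} freezes $\pathparam$, and the prediction-set property then confines $\pos(t)$ to a set that, at the instant of freezing, lay in $\freespace$. But the formal vehicle you chose has two genuine gaps. First, by \refeq{eq.Distance2Collision} one has $V=\dist_{\freespace}(\cdot)\geq 0$ identically (it is set to $0$, not to a negative value, whenever $\motionset_{\pathctrl_{\refpath}}\not\subseteq\freespace$), so $\{V\geq 0\}$ is the whole space rather than $\mathcal{S}$, and your contradiction time $t^*=\inf\{t:V(t)<0\}=+\infty$ vacuously; the scalar $V$ cannot distinguish ``touching $\partial\freespace$ from inside'' from ``already outside,'' and so cannot drive a Nagumo/barrier argument for $\mathcal{S}$ as written. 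Second, even with $\mathcal{S}=\{\motionset_{\pathctrl_{\refpath}}\subseteq\freespace\}$ correctly described, its positive invariance is \emph{stronger} than what the framework assumes and may fail: under frozen $\pathparam$ the hypotheses guarantee $\pos(t)\in\motionset_{\pathctrl_{\refpath}}(\state(t_0),\pathparam)$ and \emph{asymptotic} decay of the radius \refeq{eq.DecayingMotionPredictionRadius}, but neither monotone shrinking nor nesting of the family $t\mapsto\motionset_{\pathctrl_{\refpath}}(\state(t),\pathparam)$. Hence that set can transiently protrude outside $\freespace$ (so the joint trajectory leaves $\mathcal{S}$) while $\pos(t)$ remains safely inside $\motionset_{\pathctrl_{\refpath}}(\state(t_0),\pathparam)\subseteq\freespace$. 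Your tangent-cone/Dini check on $\partial\mathcal{S}$ therefore need not hold, and the Lipschitz dependence of the radius that you invoke does not repair this.

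The paper sidesteps both issues by abandoning invariance of $\mathcal{S}$ and proving $\pos(t)\in\freespace$ directly. It partitions $[0,\infty)$ into maximal intervals on which the Lipschitz map $t\mapsto\dist_{\freespace}(\motionset_{\pathctrl_{\refpath}}(\state(t),\pathparam(t)))$ is either strictly positive (then $\pos(t)\in\motionset_{\pathctrl_{\refpath}}(\state(t),\pathparam(t))\subseteq\mathring{\freespace}$ trivially) or identically zero (then $\pathparam\equiv\pathparam(t_i)$ on that interval and $\pos(t)\in\motionset_{\pathctrl_{\refpath}}(\state(t_i),\pathparam(t_i))\subseteq\freespace$, where the last containment at the left endpoint $t_i$ is inherited either from the initial hypothesis or from strict positivity on the preceding interval). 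Only the trajectory-containment property of $\motionset_{\pathctrl_{\refpath}}$ at the frozen instant $t_i$ is used, never the evolution of the prediction set along the interval.
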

\begin{proof}
Let $\state(t)$ and $\pathparam(t)$, respectively, denote the robot state trajectory and the path parameter trajectory under the path-following control $\pathctrl_{\refpath}(\state, \pathparam, \dot{\pathparam})$ and the time governor $\pathdyn_{\safetylevel}(\state, \pathparam)$.
Since the collision distance $\dist_{\freespace}\plist{\motionset_{\pathctrl_{\refpath}}(\state, \pathparam)}$ of the predicted robot motion set is Lipschitz continuous, one can consider a partition of the time interval $[0, \infty)$ based on distinct time instances $\plist{t_0 = 0, t_1, \ldots, t_i, \ldots }$  with $t_i < t_{i+1}$ such that $\dist_{\freespace}\plist{\motionset_{\pathctrl_{\refpath}}(\state(t), \pathparam(t))}$ is either positive or zero over $[t_i, t_{i+1})$ and changes its sign between consecutive intervals.
Given such a time interval $[t_i, t_{i+1})$:
\begin{itemize}
\item  If  $\dist_{\freespace}\plist{\motionset_{\pathctrl_{\refpath}}(\state(t), \pathparam(t))} > 0$ for any $t \in [t_i, t_{i+1})$, then  both the motion prediction set and the robot position  are in the free space interior $\mathring{\freespace}$, i.e., \mbox{$\motionset_{\pathctrl_{\refpath}}(\state(t), \pathparam(t)) \subseteq \mathring{\freespace}$} and $\pos(t) \in \mathring{\freespace}$, because, by definition, $\pos(t)  \in \motionset_{\pathctrl_{\refpath}}(\state(t), \pathparam(t))$.
\item Otherwise, for the case of $\dist_{\freespace}\plist{\motionset_{\pathctrl_{\refpath}}(\state(t), \pathparam(t))}= 0$ for any $t \!\in\! [t_i, t_{i+1})$, we have \mbox{$\motionset_{\pathctrl_{\refpath}}(\state(t_i), \pathparam(t_i)) \subseteq \freespace$}, because for $t_i = 0$, the initial condition satisfies $\motionset_{\pathctrl_{\refpath}}(\state(0), \pathparam(0)) \subseteq \freespace$, and for $t_i >0$, the Lipschitz-continuous collision distance over the previous time interval $[t_{i-1}, t_i)$ is strictly positive, i.e., $\dist_{\freespace}\plist{\motionset_{\pathctrl_{\refpath}}(\state(t), \pathparam(t))} > 0$ for any $t \in [t_{i-1}, t_i)$.
Hence, it follows from  \mbox{$\dot{\pathparam}(t) \! =\! \pathdyn_{\safetylevel}(\state(t), \pathparam(t))\! =\! 0$}  over $[t_i, t_{i+1})$ that under the path-following control policy $\pathctrl_{\refpath}(\state, \pathparam(t_i), 0)$, the robot position trajectory satisfies  $\pos(t) \in \motionset_{\pathctrl_{\refpath}}(\state(t_i), \pathparam(t_i)) \subseteq \freespace$ for all $t \in [t_i, t_{i+1})$.
\end{itemize}
Thus, the robot motion is free of collisions  under the time-governed path-following control, which completes the proof. \qedhere
\end{proof}

\begin{proposition}\label{prop.Convergence} (\emph{Convergence})
Starting at $t=0$ from any robot state $\state(0) \in \R^{\dimpos + \dimnonpos}$ and  path parameter $\pathparam(0) \in [\minpathparam, \maxpathparam]$ with  $\motionset_{\pathctrl_{\refpath}}\plist{\state_0, \pathparam_0} \subseteq \freespace $, under a Lipschitz-continuous point-stabilizing path-following control policy $\pathctrl_{\refpath}(\state, \pathparam, \dot{\pathparam})$ associated with a Lipschitz-continuous  $\clearance$-clearance  reference path $\refpath(\pathparam):[\minpathparam, \maxpathparam] \rightarrow \freespace_{\clearance}$ and the time governor $\dot{\pathparam}= \pathdyn_{\safetylevel}(\state, \pathparam)$ in \refeq{eq.SafeTimeGovernor}, 
both the path parameter $\pathparam(t)$ and the robot position $\pos(t)$ asymptotically converge to the end of the reference path $\refpath(\pathparam)$, i.e.,
\begin{subequations}\label{eq.Convergence}
\begin{align}
\lim_{t \rightarrow \infty} \pathparam(t) &= \maxpathparam, 
\\
\lim_{t \rightarrow \infty} \pos(t) &= \refpath(\maxpathparam).
\end{align} 
\end{subequations}
\end{proposition}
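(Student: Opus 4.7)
The plan is to prove monotone convergence of $\pathparam(t)$ to some limit $\pathparam^{*} \in [\pathparam(0), \maxpathparam]$, establish $\pathparam^{*} = \maxpathparam$ by contradiction, and then deduce $\pos(t) \to \refpath(\maxpathparam)$ from the asymptotic decay of the motion prediction radius. Since the time governor in \refeq{eq.SafeTimeGovernor} is nonnegative and vanishes at $\pathparam = \maxpathparam$, the trajectory $\pathparam(t)$ is monotone nondecreasing and bounded above by $\maxpathparam$, so $\pathparam^{*} := \lim_{t \to \infty} \pathparam(t)$ exists by the monotone convergence theorem. \refprop{prop.Safety} keeps $\pos(t) \in \freespace$, and I would argue in passing that the full state $\state(t)$ remains bounded since the nonpositional part is regulated by the point-stabilizing controller with a bounded reference.

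The contradiction step is the main mechanism. Assume $\pathparam^{*} < \maxpathparam$. Then $-\gain_{\pathparam}(\pathparam(t) - \maxpathparam) \geq -\gain_{\pathparam}(\pathparam^{*} - \maxpathparam) > 0$ for all $t$, so by \refeq{eq.SafeTimeGovernor} any asymptotic vanishing of $\dot{\pathparam}$ must come through the safety term. Because $\int_{0}^{\infty} \dot{\pathparam}(t) \, \diff t = \pathparam^{*} - \pathparam(0) < \infty$ and $\dot{\pathparam}(t) = \pathdyn_{\safetylevel}(\state(t), \pathparam(t))$ is uniformly continuous along the bounded trajectory, Barbalat's lemma yields $\dot{\pathparam}(t) \to 0$, whence
\begin{align}
\dist_{\freespace}\plist{\motionset_{\pathctrl_{\refpath}}(\state(t), \pathparam(t))} \to 0.
\end{align}
However, as $\pathparam(t) \to \pathparam^{*}$ and $\dot{\pathparam}(t) \to 0$, the closed-loop dynamics asymptotically reduce to the point-stabilizing control $\pathctrl_{\refpath}(\cdot, \pathparam^{*}, 0)$, and the defining decay property \refeq{eq.DecayingMotionPredictionRadius} makes the radius $\setradius_{\refpath(\pathparam^{*})}(\motionset_{\pathctrl_{\refpath}}(\state(t), \pathparam^{*}))$ tend to zero. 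Combined with the Lipschitz continuity of $\setradius_{\refpath(\pathparam)}$ in $(\state,\pathparam)$, the set $\motionset_{\pathctrl_{\refpath}}(\state(t), \pathparam(t))$ shrinks onto $\clist{\refpath(\pathparam^{*})} \subseteq \freespace_{\clearance}$, so continuity of $\dist_{\freespace}$ yields $\liminf_{t \to \infty} \dist_{\freespace}\plist{\motionset_{\pathctrl_{\refpath}}(\state(t), \pathparam(t))} \geq \clearance > 0$, contradicting the preceding display. Hence $\pathparam^{*} = \maxpathparam$.

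Once $\pathparam(t) \to \maxpathparam$, the same decay argument applied at $\pathparam = \maxpathparam$ collapses $\motionset_{\pathctrl_{\refpath}}(\state(t), \pathparam(t))$ onto $\clist{\refpath(\maxpathparam)}$, and since $\pos(t) \in \motionset_{\pathctrl_{\refpath}}(\state(t), \pathparam(t))$ by construction, this gives $\pos(t) \to \refpath(\maxpathparam)$. I expect the main technical obstacle to be the rigorous justification of this ``slowly varying $\pathparam$'' step, namely that a nonautonomous closed loop driven by a reference that merely tends to a constant still inherits the stabilization of the autonomous constant-reference system, as the decay property \refeq{eq.DecayingMotionPredictionRadius} is formulated for fixed $\pathparam$. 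A clean resolution is a LaSalle-type invariance argument on the $\omega$-limit set of $(\state(t), \pathparam(t))$, which by boundedness is nonempty, compact, and positively invariant, and on which $\pathparam \equiv \pathparam^{*}$ so that the dynamics degenerate to $\pathctrl_{\refpath}(\cdot, \pathparam^{*}, 0)$ and the radius-decay property applies directly; the conclusion then lifts back to the original trajectory.
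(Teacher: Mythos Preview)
Your proposal is correct and relies on the same two ingredients as the paper's proof: the $\clearance$-clearance of the reference path and the asymptotic decay \refeq{eq.DecayingMotionPredictionRadius} of the motion prediction radius under the frozen-parameter controller $\pathctrl_{\refpath}(\cdot,\pathparam,0)$. The paper packages these via the Lyapunov function $V(\state,\pathparam)=\tfrac{1}{2}(\pathparam-\maxpathparam)^2$ and a direct appeal to LaSalle's invariance principle, arguing that the set where $\dot V=0$ and $\pathparam\neq\maxpathparam$ (i.e., where $\dist_{\freespace}(\motionset_{\pathctrl_{\refpath}}(\state,\pathparam))=0$) can be visited only for finite time because the radius shrinks below $\clearance$. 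You instead use monotone convergence of $\pathparam(t)$, Barbalat's lemma to force $\dot\pathparam\to 0$, and a contradiction on the safety term; your closing $\omega$-limit-set argument is precisely the LaSalle step the paper invokes, so the two routes coincide at the end. What your write-up buys is an explicit acknowledgment of the ``slowly varying $\pathparam$'' issue---that \refeq{eq.DecayingMotionPredictionRadius} is stated for fixed $\pathparam$ while the actual trajectory has $\pathparam$ drifting---which the paper handles only implicitly through LaSalle; your $\omega$-limit-set resolution is the right way to make that rigorous.
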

\begin{proof}
Consider $V(\state, \pathparam) = \tfrac{1}{2}(\pathparam - \maxpathparam)^2$ as a Lyapunov function whose time rate of change satisfies $\dot{V}(\state, \pathparam) = (\pathparam - \maxpathparam) \pathdyn_{\safetylevel}(\state, \pathparam) \leq 0$ for any $\state \in \R^{\dimpos + \dimnonpos}$ and $\pathparam \in [\minpathparam, \maxpathparam]$.
Note that $\dot{V}(\state, \pathparam) = 0$ if and only if $\pathparam = \maxpathparam$ or $\dist_{\freespace}(\motionset_{\pathctrl_{\refpath}}(\state, \pathparam)) = 0$. 
By construction \refeq{eq.SafeTimeGovernor}, $\maxpathparam$ is stable under the time governor $\pathdyn_{\safetylevel}(\state, \pathparam)$.
In order to conclude that $\maxpathparam$ is the only stable path parameter under the time governor $\pathdyn_{\safetylevel}(\state, \pathparam)$, observe that $\dist_{\freespace}(\motionset_{\pathctrl_{\refpath}}(\state(t), \pathparam(t)))$ might be zero only for a finite time 
since the predicted motion set radius $\setradius_{\refpath(\pathparam)}\plist{\motionset_{\pathctrl_{\refpath}}(\state, \pathparam)}$ is Lipschitz continuous and asymptotically decaying to zero in \refeq{eq.DecayingMotionPredictionRadius} under $\pathctrl_{\refpath}(\state, \pathparam, 0)$, and the reference path has a positive $\clearance$-clearance from the free space boundary, i.e. $\dist_{\freespace}(\refpath(\pathparam))\geq \clearance > 0$. 
%since $\pathctrl_{\refpath}(\state, \pathparam, 0)$ is globally asymptotically stable at $\refpath(\pathparam)$ and the reference path $\clearance$-clearance from the free space boundary, i.e. $\dist_{\freespace}(\refpath(\pathparam))\geq \clearance > 0$.
Hence, the time governor $\pathdyn_{\safetylevel}(\state, \pathparam)$ might be zero only for a finite time duration  away from $\maxpathparam$, and always stays strictly positive for at least some finite time due to its Lipschitz continuity.  
Thus, since the path-following control $\pathctrl_{\refpath}(\state, \pathparam, \dot{\pathparam})$ is point stabilizing, it follows from LaSalle's invariance principle \cite{khalil_NonlinearSystems2001} that both the path parameter and the robot position asymptomatically converges to the end of the reference path as specified in \refeq{eq.Convergence}, which completes the proof. \qedhere 
\end{proof}

%%%%%%%%%%%%%%%%%%%%%%%%%%%%%%%%%%%%%%%%%%%%%%%%%%%%%%%%%%%%%%%%%%%%%%%%%%%%%%%%
%%%%%%%%%%%%%%%%%%%%%%%%%%%%%%%%%%%%%%%%%%%%%%%%%%%%%%%%%%%%%%%%%%%%%%%%%%%%%%%%
\section{Time Governors for Safe Path Following:\\PhD Motion Control and Prediction Examples}
\label{sec.TimeGovernorsExample}
%%%%%%%%%%%%%%%%%%%%%%%%%%%%%%%%%%%%%%%%%%%%%%%%%%%%%%%%%%%%%%%%%%%%%%%%%%%%%%%%
%%%%%%%%%%%%%%%%%%%%%%%%%%%%%%%%%%%%%%%%%%%%%%%%%%%%%%%%%%%%%%%%%%%%%%%%%%%%%%%%

In this section, we present example feedback motion control and prediction methods to demonstrate an application of our time governor framework for safe path-following control of the fully actuated high-order robot dynamics via proportional high-order  derivative (PhD) feedback control.    

%%%%%%%%%%%%%%%%%%%%%%%%%%%%%%%%%%%%%%%%%%%%%%%%%%%%%%%%%%%%%%%%%%%%%%%%%%%%%%%%
%%%%%%%%%%%%%%%%%%%%%%%%%%%%%%%%%%%%%%%%%%%%%%%%%%%%%%%%%%%%%%%%%%%%%%%%%%%%%%%%
\subsection{PhD Path-Following Control of  Highly Dynamic Robots}
%%%%%%%%%%%%%%%%%%%%%%%%%%%%%%%%%%%%%%%%%%%%%%%%%%%%%%%%%%%%%%%%%%%%%%%%%%%%%%%%
%%%%%%%%%%%%%%%%%%%%%%%%%%%%%%%%%%%%%%%%%%%%%%%%%%%%%%%%%%%%%%%%%%%%%%%%%%%%%%%%

A standard control design approach for complex nonlinear robotic systems is to simplify the actual robot dynamics to  the fully actuated $\order^{\text{th}}$-order robot model, where $\order \! \geq\! 1$, via feedback linearization \cite{deluca_oriolo_vendittelli_SYROCO2000} or differential flatness \cite{zhou_schwager_ICRA2014}.
Accordingly, based on a classical negative error feedback approach \cite{khalil_NonlinearSystems2001}, we consider the fully actuated $\order^{\text{th}}$-order robot model  that follows a given (piecewise) smooth reference path $\refpath(s)\!:\![\minpathparam, \maxpathparam] \rightarrow \R^{\dimpos}$ under a point-stabilizing proportional-higher-order-derivative (PhD) path-following control $\phdctrl_{\refpath}(\state, \pathparam, \dot{\pathparam})$
as
\begin{align}\label{eq.PhDPathFollowingControl}
\pos^{(\order)} &= \phdctrl_{\refpath}(\state, \pathparam, \dot{\pathparam})
 := \! - \!\! \sum_{k = 1}^{\order - 1} \!\phdgain_{k,\phdroots}\pos^{(k)} \! + \phdgain_{0, \phdroots} \refpath(s) \!+ \phdgain_{1, \phdroots} \tfrac{\partial \refpath(\pathparam)}{\partial \pathparam} \dot{\pathparam}
\end{align}
where $\pos^{(k)}=\frac{\diff^k}{\diff t^k} \pos$ denotes the $k^{\text{th}}$ time derivative of the robot position $\pos \! \in\! \R^{\dimpos}$, $\state=\plist{\pos^{(0)}, \pos^{(1)}, \ldots, \pos^{(\order-1)} } \!\in\! \R^{\order \dimpos}$ is the robot state%
\footnote{Note that for the fully-actuated $\order^{\text{th}}$-order robot model,  the nonpositional robot variable $\nonpos$ referred in \refeq{eq.RobotDynamics} corresponds to the robot velocity, acceleration and other high-order position derivatives, i.e., $\nonpos = \plist{\pos^{(1)}, \ldots, \pos^{(\order-1)}}$.} 
consisting of its position, velocity and so on, and $\phdgain_{0, \phdroots}, \ldots, \phdgain_{\order-1, \phdroots} \in \R$ are fixed scalar control gains that results in  negative real characteristic polynomial roots%
\footnote{Negative real characteristic roots ensure nonovershooting PhD control for the $\order^{\text{th}}$-order robot dynamics  to avoid oscillatory motion, which is a generalized notion of underdamped PD control of the second-order systems~\cite{isleyen_vandewouw_arslan_RAL2022}.} 
$\phdroots=\plist{\phdroot_1, \ldots, \phdroot_{\order}} \in \R_{<0}^{\order}$ for the closed-loop robot dynamics in \refeq{eq.PhDPathFollowingControl} and so they uniquely satisfy 
\begin{align}\label{eq.PhDRootsGains}
\prod_{k=1}^{\order} (\phdroot - \phdroot_k) = \sum_{k=0}^{\order} \phdgain_{k, \phdroots}\phdroot^k.
\end{align}
with $\phdgain_{\order, \phdroots} = 1$.
Note that the negative characteristic polynomial roots ensure the point stabilizing property of PhD path-following control  in \refeq{eq.PhDPathFollowingControl} such that when the path parameter is keep constant, i.e., $\dot{\pathparam} = 0$, the closed-loop robot dynamics under $\phdctrl_{\refpath}(\state, \pathparam, 0)$ in \refeq{eq.PhDPathFollowingControl} is globally asymptotically stable at position $\pos = \refpath(\pathparam)$ with zero velocity and high-order position derivatives, i.e., $\pos^{(k)} = 0$ for all $k = 1, \ldots, \order -1$.
Moreover, one can also represent the PhD path-following dynamics in \refeq{eq.PhDPathFollowingControl} as a first-order higher-dimensional linear dynamical system in the state space as
\begin{align}\label{eq.PhDPathFollowingStateSpace}
\dot{\state} = (\phdmat_{\phdroots} \otimes \mat{I}_{\dimpos \times \dimpos}) \plist{\state - \tr{\blist{\refpath(\pathparam), \tfrac{\diff \refpath(s)}{\diff \pathparam} \dot{\pathparam}, 0, \ldots, 0}}}
\end{align}
where $\otimes$ denotes the Kronecker product and $\phdmat_{\phdroots} \in \R^{\order \times \order}$ is the companion matrix with eigenvalues $\phdroots$ that is defined as 
\begin{align}
\phdmat_{\phdroots}:= \begin{bmatrix}
0 & 1 & 0 & \hdots & 0 \\
0 & 0 & 1 & \hdots & 0 \\
\vdots & \vdots & \vdots & \ddots & \vdots \\
0 & 0 & 0  & \hdots & 1 \\
- \phdgain_{0, \phdroots} & - \phdgain_{1, \phdroots} & - \phdgain_{2, \phdroots}, & \hdots & -\phdgain_{\order-1, \phdroots}  
\end{bmatrix}.
\end{align}

%%%%%%%%%%%%%%%%%%%%%%%%%%%%%%%%%%%%%%%%%%%%%%%%%%%%%%%%%%%%%%%%%%%%%%%%%%%%%%%%
%%%%%%%%%%%%%%%%%%%%%%%%%%%%%%%%%%%%%%%%%%%%%%%%%%%%%%%%%%%%%%%%%%%%%%%%%%%%%%%%
\subsection{Convex Motion Predictions for PhD Path Following}
%%%%%%%%%%%%%%%%%%%%%%%%%%%%%%%%%%%%%%%%%%%%%%%%%%%%%%%%%%%%%%%%%%%%%%%%%%%%%%%%
%%%%%%%%%%%%%%%%%%%%%%%%%%%%%%%%%%%%%%%%%%%%%%%%%%%%%%%%%%%%%%%%%%%%%%%%%%%%%%%%

An attractive feature of PhD path-following control in \refeq{eq.PhDPathFollowingControl} is that its feedback motion prediction can be analytically performed using the classical Lyapunov ellipsoids and the recently proposed Vandermonde simplexes \cite{isleyen_vandewouw_arslan_RAL2022}.

%%%%%%%%%%%%%%%%%%%%%%%%%%%%%%%%%%%%%%%%%%%%%%%%%%%%%%%%%%%%%%%%%%%%%%%%%%%%%%%%
%%%%%%%%%%%%%%%%%%%%%%%%%%%%%%%%%%%%%%%%%%%%%%%%%%%%%%%%%%%%%%%%%%%%%%%%%%%%%%%%
\subsubsection{Lyapunov Motion Prediction for PhD Path Following}
%%%%%%%%%%%%%%%%%%%%%%%%%%%%%%%%%%%%%%%%%%%%%%%%%%%%%%%%%%%%%%%%%%%%%%%%%%%%%%%%
%%%%%%%%%%%%%%%%%%%%%%%%%%%%%%%%%%%%%%%%%%%%%%%%%%%%%%%%%%%%%%%%%%%%%%%%%%%%%%%%

Due to the Lyapunov stability of  the linear PhD path-following dynamics in \refeq{eq.PhDPathFollowingStateSpace}, starting from any initial robot state \mbox{$\state_{0} \in \R^{\order \dimpos}$}, the robot position trajectory $\pos(t)$ under the PhD path-following control $\phdctrl_{\refpath}(\state, \pathparam, 0)$  can be bounded for all future times by the projected Lyapunov ellipsoid $\motionelp_{\phdctrl_{\refpath}}(\state_{0}, \pathparam)$ that is defined as \cite{isleyen_vandewouw_arslan_RAL2022}
\begin{align}
\motionelp_{\phdctrl_{\refpath}}(\state, \pathparam):= \elp\plist{\refpath(\pathparam), \tr{\mat{I}}_{\order \dimpos \times \dimpos} \lyapmat^{-1} \mat{I}_{\order \dimpos \times \dimpos}, \norm{\state \! -\! \boldsymbol{\refpath}(\pathparam)}_{\lyapmat}
} \nonumber
\end{align}
where $\elp(\elpctr, \elpmat, \elprad) := \clist{\elpctr + \elprad \elpmat^{\frac{1}{2}} \vect{x} \big| \vect{x} \in \R^{\dimpos}, \norm{\vect{x}} \leq 1}$ is the ellipsoid centered at $\elpctr \in \R^{\dimpos}$ and associated with a positive semidefinite matrix\reffn{fn.MatrixSquareRoot} $\elpmat \in \PSDM^{\dimpos}$ and a nonnegative scalar $\elprad \geq 0$, and the stable robot state associated with the reference path point $\refpath(\pathparam)$ is denoted by $\boldsymbol{\refpath}(\pathparam) := (\refpath(\pathparam), 0, \ldots, 0) \in \R^{\order \dimpos}$, and  $\lyapmat \in \PDM^{\order \dimpos}$ is a symmetric positive-definite matrix that uniquely satisfies the Lyapunov equation 
\begin{align}\label{eq.PhDLyapunovEquation}
\tr{(\phdmat_{\phdroots} \otimes \mat{I}_{\dimpos \times \dimpos})} \lyapmat + \lyapmat  (\phdmat_{\phdroots} \otimes \mat{I}_{\dimpos \times \dimpos}) + \tr{\decaymat}\decaymat = 0 
\end{align}
for some (damping/decay) matrix $\decaymat\!\in\! \R^{m \times \order \dimpos}$ such that \mbox{$(\phdmat_{\phdroots} \otimes \mat{I}_{\dimpos \times \dimpos}, \decaymat)$} is observable \cite{khalil_NonlinearSystems2001}.

\addtocounter{footnote}{1}\footnotetext{\label{fn.MatrixSquareRoot}The unique symmetric positive-definite square-root of $\elpmat$ that satisfies $\elpmat = \elpmat^{\frac{1}{2}} \tr{(\elpmat^{\frac{1}{2}})\!}$ can be determined as $\elpmat^{\frac{1}{2}}\!=\!\mat{V}\diag\plist{\sqrt{\sigma_1}, \ldots, \sqrt{\sigma_\order}} \tr{\mat{V}}$ using the singular value decomposition $\elpmat = \mat{V} \diag\plist{\sigma_1, \ldots, \sigma_n} \tr{\mat{V}} $, where $\diag\plist{\sigma_1, \ldots, \sigma_n}$ is the diagonal matrix with  elements ${\sigma_1, \ldots, \sigma_n}$.}

\begin{figure}[t]
\center
\begin{tabular}{@{\hspace{1mm}}c@{\hspace{2mm}}c@{}}
\includegraphics[width = 0.48\columnwidth]{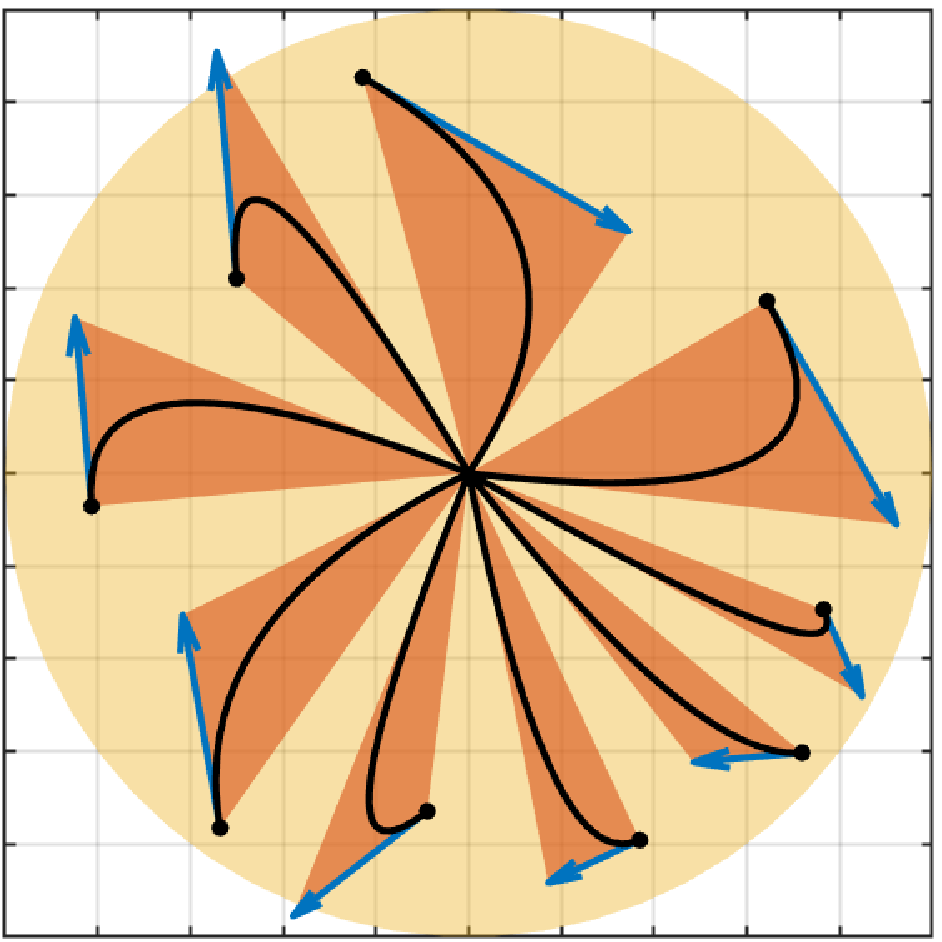} & 
\includegraphics[width = 0.48\columnwidth]{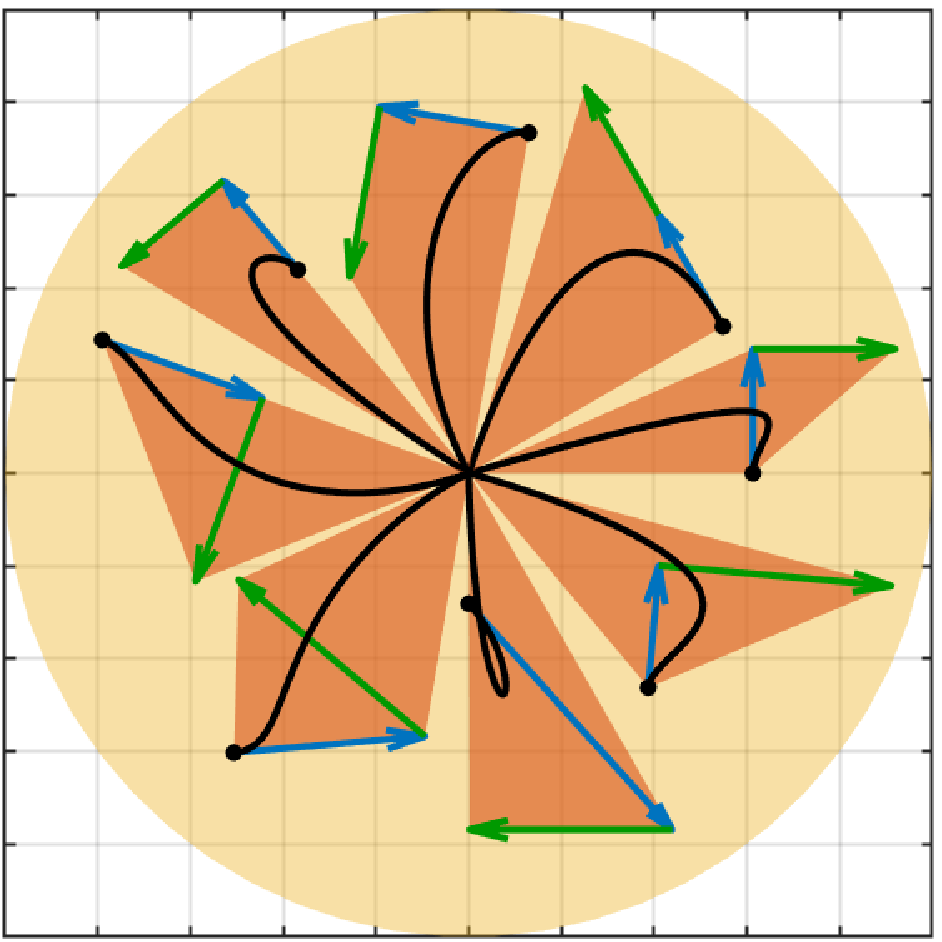}
\end{tabular}
\vspace{-1mm}
\caption{
Lyapunov (yellow circle) and Vandermonde (orange polygons) motion prediction sets for PhD motion control of (left) the second-order and (right) the third-order robot dynamics towards the origin, where the initial robot position is indicated by a black dot and the initial (scaled) velocity (blue) and acceleration (green) are illustrated by arrows.
Here, the characteristic roots $\phdroots$ of PhD control are uniformly spaced over $[-2, -1]$  and $\decaymat=\mat{I}_{\order\dimpos \times \order\dimpos }$.
}
\label{fig.LyapunovEllipsoidVandermondeSimplex}
\vspace{-\baselineskip}
\end{figure}

In addition to defining Lipschitz-continuous collision distance $\dist_{\freespace}(\motionelp_{\pathctrl_{\refpath}}(\state, \pathparam))$ \cite{isleyen_vandewouw_arslan_RAL2022},  the radius of Lyapunov motion ellipsoids is Lipschitz continuous and asymptotically decaying to zero, as required by our time governor framework.  

%As required by our time governor framework, the collision distance $\dist_{\freespace}(\motionelp_{\pathctrl_{\refpath}}(\state, \pathparam))$ of predicted robot motion is Lipschitz continuous \cite{isleyen_vandewouw_arslan_RAL2022}, as well as its radius satisfies:

\begin{lemma}\label{prop.LyapunovEllipsoidRadius}
\emph{(Lyapunov Motion Ellipsoid Radius)}
The radius of the Lyapunov motion ellipsoid $\motionelp_{\phdctrl_{\refpath}}(\state, \pathparam)$ relative to the reference path point $\refpath(\pathparam)$  is given by
\begin{align}
& \setradius_{\refpath(\pathparam)}\plist{\motionelp_{\phdctrl_{\refpath}}(\state, \pathparam)} = \norm{\tr{\mat{I}}_{\order \dimpos \times \dimpos} \lyapmat^{-1} \mat{I}_{\order \dimpos \times \dimpos}}^{\frac{1}{2}} \norm{\state - \boldsymbol{\refpath}(\pathparam)}_{\lyapmat} \nonumber
\end{align}
which is Lipschitz continuous and asymptotically decays to zero along the solution trajectory $\pos(t)$ of $\pos^{(\order)} = \phdctrl_{\refpath}(\state, \pathparam, 0)$.
\end{lemma}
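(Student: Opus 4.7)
The plan is to unpack the Lyapunov motion ellipsoid, read off the radius directly from the ellipsoid definition, verify Lipschitz continuity in $(\state,\pathparam)$ via the reverse triangle inequality, and finally invoke a standard Lyapunov/LaSalle analysis on the closed-loop error dynamics to obtain the asymptotic decay.

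First I would write the predicted motion set as $\motionelp_{\phdctrl_{\refpath}}(\state,\pathparam)=\elp\plist{\refpath(\pathparam),\mat{M},\elprad}$ with $\mat{M}:=\tr{\mat{I}}_{\order\dimpos\times\dimpos}\lyapmat^{-1}\mat{I}_{\order\dimpos\times\dimpos}$ and $\elprad:=\norm{\state-\boldsymbol{\refpath}(\pathparam)}_{\lyapmat}$, and apply the set-radius definition \refeq{eq.SetRadius}:
\begin{align*}
\setradius_{\refpath(\pathparam)}\plist{\motionelp_{\phdctrl_{\refpath}}(\state,\pathparam)} = \max_{\norm{\vect{x}}\leq 1}\norm{\elprad\,\mat{M}^{\frac{1}{2}}\vect{x}} = \elprad\,\norm{\mat{M}^{\frac{1}{2}}} = \elprad\,\norm{\mat{M}}^{\frac{1}{2}},
\end{align*}
where the last equality uses that the spectral norm of the symmetric positive semidefinite square root $\mat{M}^{\frac{1}{2}}$ equals the square root of the spectral norm of $\mat{M}$. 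Substituting $\elprad$ gives the claimed closed form.

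Since $\norm{\mat{M}}^{\frac{1}{2}}$ is a fixed constant depending only on $\lyapmat$, Lipschitz continuity reduces to that of the map $(\state,\pathparam)\mapsto\norm{\state-\boldsymbol{\refpath}(\pathparam)}_{\lyapmat}$. The reverse triangle inequality combined with the bound $\norm{\vect{z}}_{\lyapmat}\leq\norm{\lyapmat}^{\frac{1}{2}}\norm{\vect{z}}$ yields
\begin{align*}
\absval{\norm{\state_1-\boldsymbol{\refpath}(\pathparam_1)}_{\lyapmat}-\norm{\state_2-\boldsymbol{\refpath}(\pathparam_2)}_{\lyapmat}} \leq \norm{\lyapmat}^{\frac{1}{2}}\plist{\norm{\state_1-\state_2}+\norm{\refpath(\pathparam_1)-\refpath(\pathparam_2)}},
\end{align*}
which is Lipschitz in $(\state,\pathparam)$ because $\refpath$ is Lipschitz continuous by hypothesis.

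For the asymptotic decay, with $\dot{\pathparam}=0$ as required by the feedback-motion-prediction setting in \refsec{sec.FeedbackMotionPrediction}, the error coordinate $\tilde{\state}:=\state-\boldsymbol{\refpath}(\pathparam)$ satisfies $\dot{\tilde{\state}}=(\phdmat_{\phdroots}\otimes\mat{I}_{\dimpos\times\dimpos})\tilde{\state}$ by \refeq{eq.PhDPathFollowingStateSpace}. Taking $V(\tilde{\state})=\tr{\tilde{\state}}\lyapmat\tilde{\state}=\norm{\state-\boldsymbol{\refpath}(\pathparam)}_{\lyapmat}^{2}$ as a Lyapunov function and using the Lyapunov equation \refeq{eq.PhDLyapunovEquation}, one obtains $\dot{V}=-\norm{\decaymat\tilde{\state}}^{2}\leq 0$; observability of $(\phdmat_{\phdroots}\otimes\mat{I}_{\dimpos\times\dimpos},\decaymat)$ together with LaSalle's invariance principle \cite{khalil_NonlinearSystems2001} then forces $\tilde{\state}(t)\to 0$, so $\norm{\state(t)-\boldsymbol{\refpath}(\pathparam)}_{\lyapmat}\to 0$ and the motion-ellipsoid radius decays to zero. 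The only subtle point is this last step: since the dissipation rate $-\norm{\decaymat\tilde{\state}}^{2}$ need not be strictly definite, strict convergence is obtained via the observability assumption and LaSalle, rather than a direct quadratic exponential-decay bound.
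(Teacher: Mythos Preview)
Your proof is correct and follows essentially the same route as the paper: read off the radius from the generic ellipsoid formula $\setradius_{\elpctr}(\elp(\elpctr,\elpmat,\elprad))=\elprad\norm{\elpmat}^{1/2}$, obtain Lipschitz continuity from the reverse triangle inequality, and conclude asymptotic decay from global asymptotic stability of the closed-loop state at $\boldsymbol{\refpath}(\pathparam)$. The only difference is that you spell out the stability argument via the Lyapunov equation \refeq{eq.PhDLyapunovEquation} and LaSalle with observability, whereas the paper simply invokes the already established global asymptotic stability of $\phdctrl_{\refpath}(\state,\pathparam,0)$ at $\boldsymbol{\refpath}(\pathparam)$; since $\phdmat_{\phdroots}\otimes\mat{I}$ is Hurwitz by construction, your LaSalle step is sound but more elaborate than necessary.
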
 
\begin{proof}
The closed-form expression for the Lyapunov ellipsoid radius follows from the radius of the ellipsoid $\elp(\elpctr, \elpmat, \elprad)$ relative to its center which is given by
\begin{align}
\setradius_{\elpctr}(\elp(\elpctr, \elpmat, \elprad)) = \max_{\norm{\vect{v}} \leq 1} \norm{\elprad \elpmat^{\frac{1}{2}} \vect{v}} = \elprad \norm{\elpmat}^{\frac{1}{2}}.  
\end{align}
The Lipschitz continuity of the Lyapunov ellipsoid radius is a direct consequence of the Lipschitz continuity of the Euclidean norm, due to the reverse triangle inequality, i.e., $\absval{\norm{\vect{a}} - \norm{\vect{b}}} \leq \norm{\vect{a} - \vect{b}}$.
Finally, the asymptotic decay of the Lyapunov ellipsoid radius under the PhD path-following control $\phdctrl_{\refpath}(\state, \pathparam, 0)$ follows from that it is  globally asymptotically stable at the robot state $\state =  \boldsymbol{\refpath}(\pathparam) = (\refpath(\pathparam), 0, \ldots, 0)$.  
\qedhere
\end{proof}

%%%%%%%%%%%%%%%%%%%%%%%%%%%%%%%%%%%%%%%%%%%%%%%%%%%%%%%%%%%%%%%%%%%%%%%%%%%%%%%%
%%%%%%%%%%%%%%%%%%%%%%%%%%%%%%%%%%%%%%%%%%%%%%%%%%%%%%%%%%%%%%%%%%%%%%%%%%%%%%%%
\subsubsection{Vandemonde Motion Prediction for PhD Path Following}
%%%%%%%%%%%%%%%%%%%%%%%%%%%%%%%%%%%%%%%%%%%%%%%%%%%%%%%%%%%%%%%%%%%%%%%%%%%%%%%%
%%%%%%%%%%%%%%%%%%%%%%%%%%%%%%%%%%%%%%%%%%%%%%%%%%%%%%%%%%%%%%%%%%%%%%%%%%%%%%%%

As a more accurate alternative to Lyapunov ellipsoids, we also consider Vandermonde simplexes \cite{arslan_isleyen_2022, isleyen_vandewouw_arslan_RAL2022} for feedback motion prediction of PhD path-following control.
Under the PhD path-following control $\phdctrl_{\refpath}(\state, \pathparam, 0)$ in \refeq{eq.PhDPathFollowingControl}, the robot position trajectory $\pos(t)$, starting from any initial state \mbox{$\state_0=(\pos_{0}^{(0)}, \pos_{0}^{(1)}, \ldots, \pos_{0}^{(\order -1)}) \in \R^{\order \dimpos}$} towards the reference path point $\refpath(\pathparam)$ is contained in the Vandermonde motion simplex $\motionspx_{\phdctrl_{\refpath}}(\state, \pathparam)$ that is defined as \cite{isleyen_vandewouw_arslan_RAL2022}
\begin{align}\nonumber
\motionspx_{\phdctrl_{\refpath}}\!(\state, \pathparam) \!:= \!\conv\!\plist{\!\!\clist{\big.\refpath(\pathparam)} \! \bigcup \clist{\!\sum_{k=0}^{m}\! \frac{\phdgain_{k, \phdroots_{\neg \max}}}{\phdgain_{0, \phdroots_{\neg \max}}} \pos^{(k)}\!\!}_{\! \!0 \leq m \leq \order -1 } } 
\end{align}    
where $\conv$ denotes the convex hull operator, $\phdroots_{\neg \max}$ is the $(\order-1)$-dimensional subvector of PhD characteristic polynomial roots $\phdroots \!=\! \plist{\phdroot_1, \ldots, \phdroot_\order}$ excluding a maximal element that equals to $\max(\phdroots)$, and $\phdgain_{0, \phdroots}, \ldots, \phdgain_{\order, \phdroots} $ are the PhD control gains that satisfies \refeq{eq.PhDRootsGains}. 
As illustrated in \reffig{fig.LyapunovEllipsoidVandermondeSimplex}, Vandermonde simplexes offer significantly more accurate and less conservative feedback motion prediction for PhD path-following control compared to Lyapunov ellipsoids, because Vandermonde simplexes have a stronger dependency on the PhD control gains and the robot state \cite{isleyen_vandewouw_arslan_RAL2022}.

Similar to Lyapunov ellipsoids, Vandermonde motion simplexes induce a Lipschitz-continuous collision distance $\dist_{\freespace}\plist{\motionspx_{\phdctrl_{\refpath}}(\state, \pathparam)}$  \cite{isleyen_vandewouw_arslan_RAL2022} and  have a Lipschitz continuous and asymptotically decaying radius.

\begin{lemma}\label{lem.VandermondeSimplexRadius}
\emph{(Vandermonde Motion Simplex Radius)}
The radius of the Vandemonde motion simplex $\motionspx_{\phdctrl_{\refpath}}(\state, \pathparam)$ relative to reference path point $\refpath(\pathparam)$ is given by
\begin{align} \nonumber
\setradius_{\refpath(\pathparam)}\plist{\motionspx_{\phdctrl_{\refpath}}(\state, \pathparam)\!} = \! \!\max_{0 \leq m \leq \order -1} \left \| \refpath(\pathparam) - \!\! \sum_{k=0}^{m}\! \frac{\phdgain_{k, \phdroots_{\neg\max}}}{\phdgain_{0, \phdroots_{\neg \max}}} \pos^{(k)} \right \|
\end{align}
which is Lipschitz continuous and asymptotically decays to zero along the solution trajectory $\pos(t)$ of $\pos^{(\order)} = \phdctrl_{\refpath}(\state, \pathparam, 0)$.  
\end{lemma}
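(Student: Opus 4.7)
The plan is to establish the three claims (closed-form radius, Lipschitz continuity, asymptotic decay) in sequence, following the same template used for the Lyapunov motion ellipsoid in \reflem{prop.LyapunovEllipsoidRadius}.

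First, I would reduce the radius computation to a maximum over the vertices of the Vandermonde simplex. Since $\motionspx_{\phdctrl_{\refpath}}(\state, \pathparam)$ is by definition the convex hull of a finite set of points, and since the function $\vect{a} \mapsto \norm{\vect{a} - \refpath(\pathparam)}$ is convex, its maximum over a compact convex set is attained at an extreme point. Thus
\begin{align}
\setradius_{\refpath(\pathparam)}\plist{\motionspx_{\phdctrl_{\refpath}}(\state, \pathparam)}
= \max\clist{0, \max_{0 \leq m \leq \order-1} \left\| \refpath(\pathparam) - \sum_{k=0}^{m} \tfrac{\phdgain_{k, \phdroots_{\neg\max}}}{\phdgain_{0, \phdroots_{\neg\max}}} \pos^{(k)} \right\|} \nonumber
\end{align}
and the $m=0$ term equals $\norm{\refpath(\pathparam) - \pos^{(0)}}$ (because $\phdgain_{0,\phdroots_{\neg\max}}/\phdgain_{0,\phdroots_{\neg\max}}=1$), which subsumes the trivial vertex $\refpath(\pathparam)$ itself via the max, giving the stated expression.

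Second, I would prove Lipschitz continuity by observing that each of the $\order$ candidate expressions inside the outer maximum is a composition of (i) the Lipschitz-continuous reference path map $\pathparam \mapsto \refpath(\pathparam)$, (ii) a fixed affine combination of the components $\pos^{(0)}, \ldots, \pos^{(\order-1)}$ of the state $\state$, and (iii) the Euclidean norm, which is $1$-Lipschitz via the reverse triangle inequality $\absval{\norm{\vect{a}} - \norm{\vect{b}}} \leq \norm{\vect{a}-\vect{b}}$. Composition and sum of Lipschitz maps are Lipschitz, and the pointwise maximum of finitely many Lipschitz functions is Lipschitz (by the same $\min/\max = (f+g\pm\absval{f-g})/2$ identity referenced in the footnote about the safe time governor). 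Hence $\setradius_{\refpath(\pathparam)}\plist{\motionspx_{\phdctrl_{\refpath}}(\state, \pathparam)}$ is Lipschitz jointly in $(\state,\pathparam)$.

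Third, I would establish asymptotic decay by invoking the global asymptotic stability of $\phdctrl_{\refpath}(\state, \pathparam, 0)$ at the stable state $\boldsymbol{\refpath}(\pathparam) = (\refpath(\pathparam), 0, \ldots, 0)$ (already exploited in the Lyapunov ellipsoid argument and guaranteed by the negativity of the characteristic roots $\phdroots$). Along any closed-loop trajectory, $\pos^{(0)}(t) \to \refpath(\pathparam)$ and $\pos^{(k)}(t) \to 0$ for $k = 1, \ldots, \order-1$. Substituting these limits into each vertex expression gives, for every $m$,
\begin{align}
\refpath(\pathparam) - \sum_{k=0}^{m} \tfrac{\phdgain_{k, \phdroots_{\neg\max}}}{\phdgain_{0, \phdroots_{\neg\max}}} \pos^{(k)}(t) \longrightarrow \refpath(\pathparam) - \refpath(\pathparam) = 0, \nonumber
\end{align}
so each Euclidean norm vanishes in the limit, and by continuity of $\max$ the entire expression tends to zero. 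Combining the three parts closes the proof.

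I do not anticipate any genuine obstacle: the main subtlety is simply noting that the maximum over a convex hull of a convex function reduces to a maximum over vertices (step one) and then propagating Lipschitz continuity and asymptotic stability through a finite max. The argument is almost entirely parallel to that of \reflem{prop.LyapunovEllipsoidRadius}, with the Lyapunov-weighted norm replaced by a finite maximum of Euclidean norms of affine functions of the state.
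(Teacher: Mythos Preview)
Your proposal is correct and follows essentially the same approach as the paper: the closed-form via maximum-over-vertices of a convex function on a convex hull, Lipschitz continuity from the Euclidean norm plus preservation under finite maxima, and asymptotic decay from global asymptotic stability of $\phdctrl_{\refpath}(\state,\pathparam,0)$ at $\boldsymbol{\refpath}(\pathparam)$. Your write-up is somewhat more detailed (e.g., spelling out why the trivial vertex $\refpath(\pathparam)$ is absorbed by the $m=0$ term), but the argument is structurally identical to the paper's.
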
 
\begin{proof}
The closed-form expression for the Vandemonde motion simplex radius follows from its convexity, because the reference path point $\refpath(\pathparam)$ is contained in the convex Vandermonde motion simplex and its radius relative to $\refpath(\pathparam)$ is given by the maximum vertex distance.
The Lipschitz continuity of the Vandermonde simplex radius is due to the Lipschitz continuity of the Euclidean norm and the fact that the maximum of multiple Lipschitz continuous functions are Lipschitz \reffn{fn.MinimumLipschitzContinuity} \cite{hager_JCO1979}.  
Finally, the asymptotic decay property of the Vandermonde simplex radius under the PhD path-following control $\phdctrl_{\refpath}(\state, \pathparam, 0)$ follows from that its globally asymptotically stable  robot state $\state = \boldsymbol{\refpath}(\pathparam)= \plist{\refpath(\pathparam), 0, \ldots, 0}$ satisfies
\begin{equation}
\left \| \refpath(\pathparam) - \!\! \sum_{k=0}^{m}\! \frac{\phdgain_{k, \phdroots_{\neg\max}}}{\phdgain_{0, \phdroots_{\neg \max}}} \pos^{(k)} \right \| = \left \| \sum_{k=1}^{m}\! \frac{\phdgain_{k, \phdroots_{\neg\max}}}{\phdgain_{0, \phdroots_{\neg \max}}} \pos^{(k)} \right \| = 0 \nonumber
\end{equation}
which completes the proof
\qedhere
\end{proof}

\begin{figure*}[t]
\centering
\begin{tabular}{@{}c@{\hspace{1mm}}c@{\hspace{1mm}}c@{\hspace{1mm}}c@{}}
\includegraphics[width=0.24\textwidth]{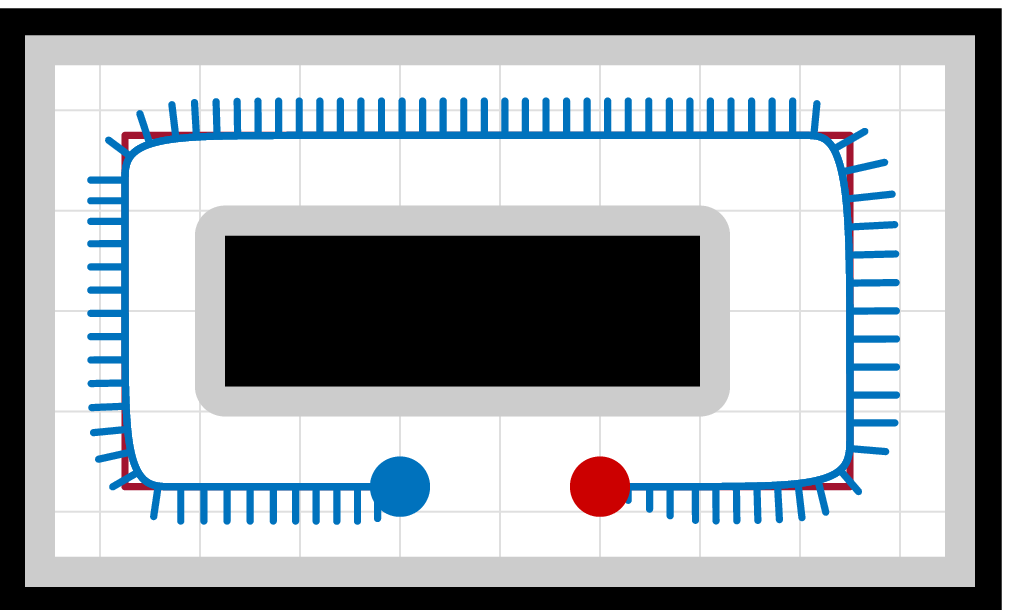} &
\includegraphics[width=0.24\textwidth]{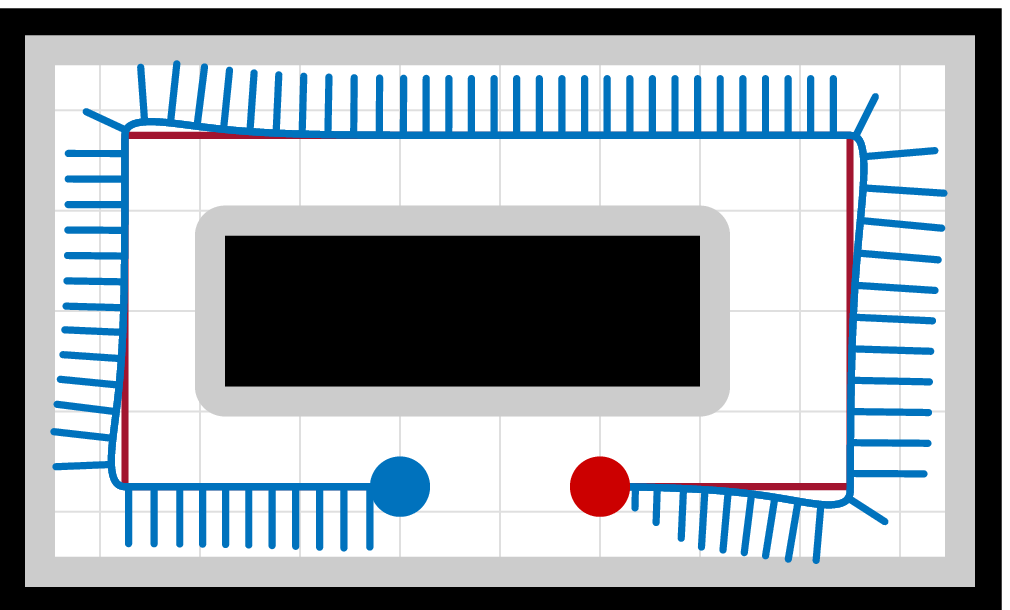} &
\includegraphics[width=0.24\textwidth]{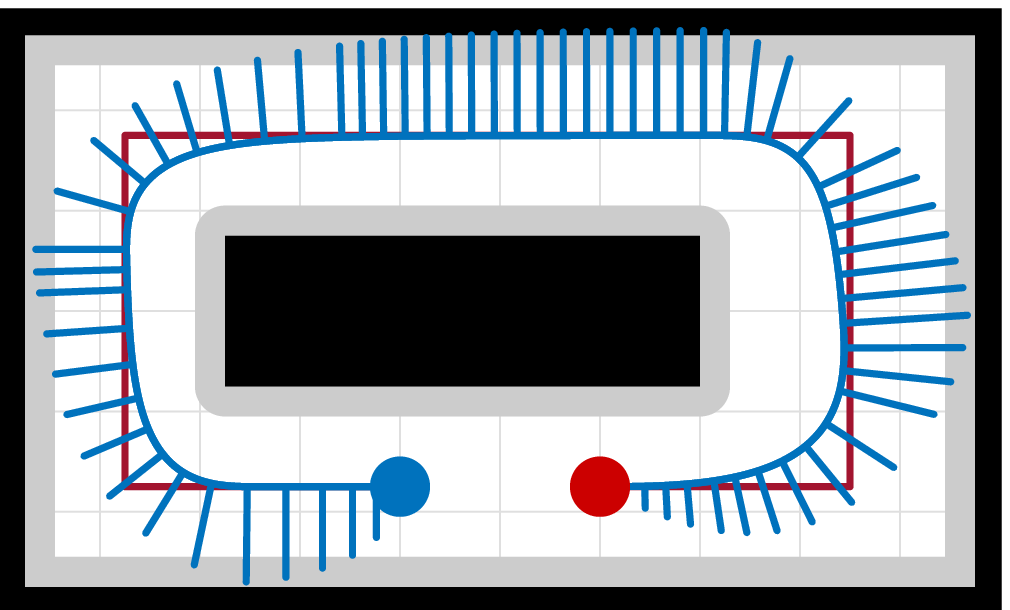}& 
\includegraphics[width=0.24\textwidth]{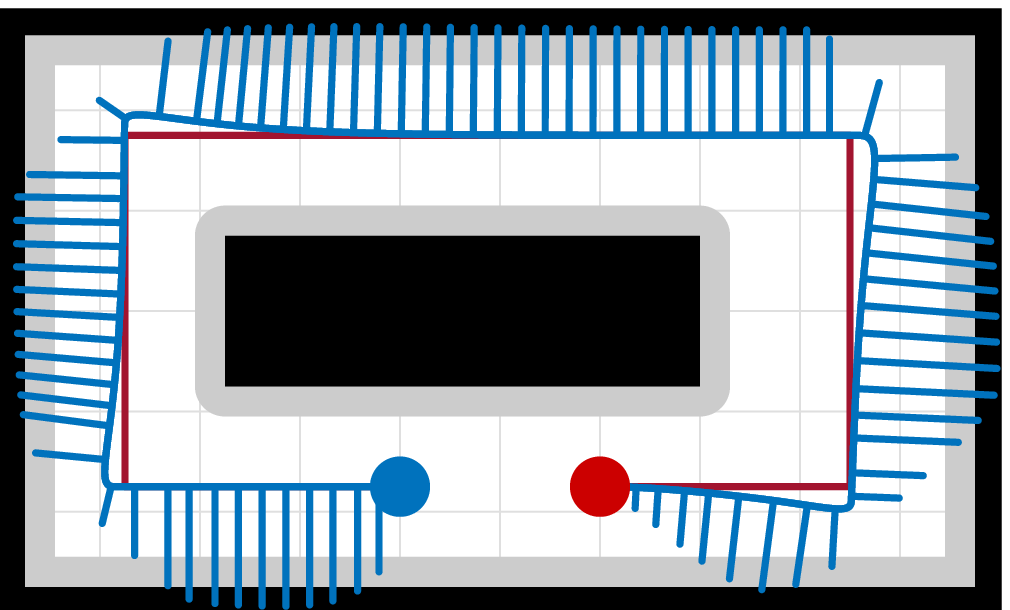} 
\\
\includegraphics[width=0.24\textwidth]{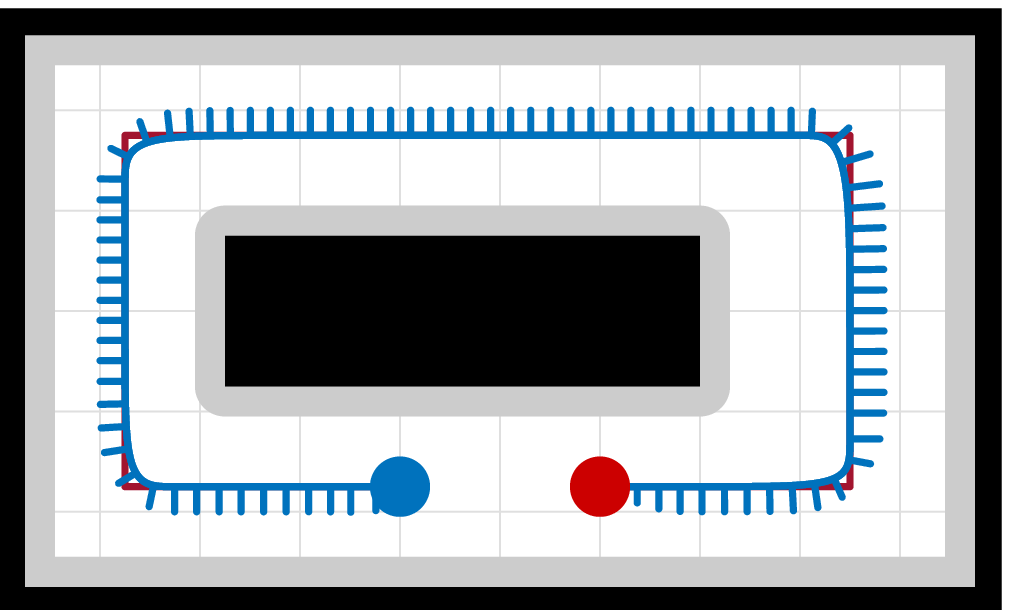} &
\includegraphics[width=0.24\textwidth]{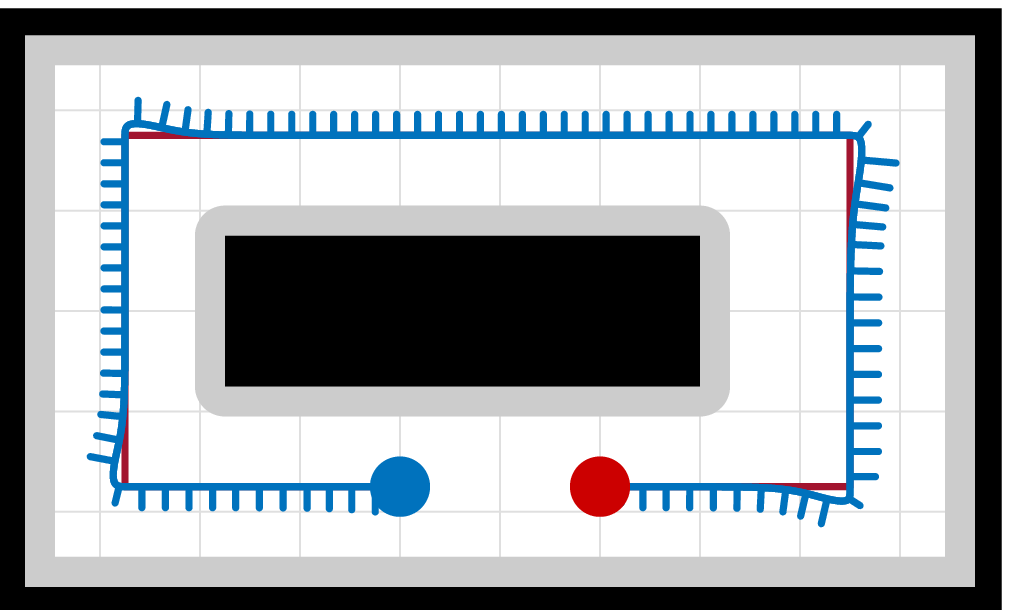} &
\includegraphics[width=0.24\textwidth]{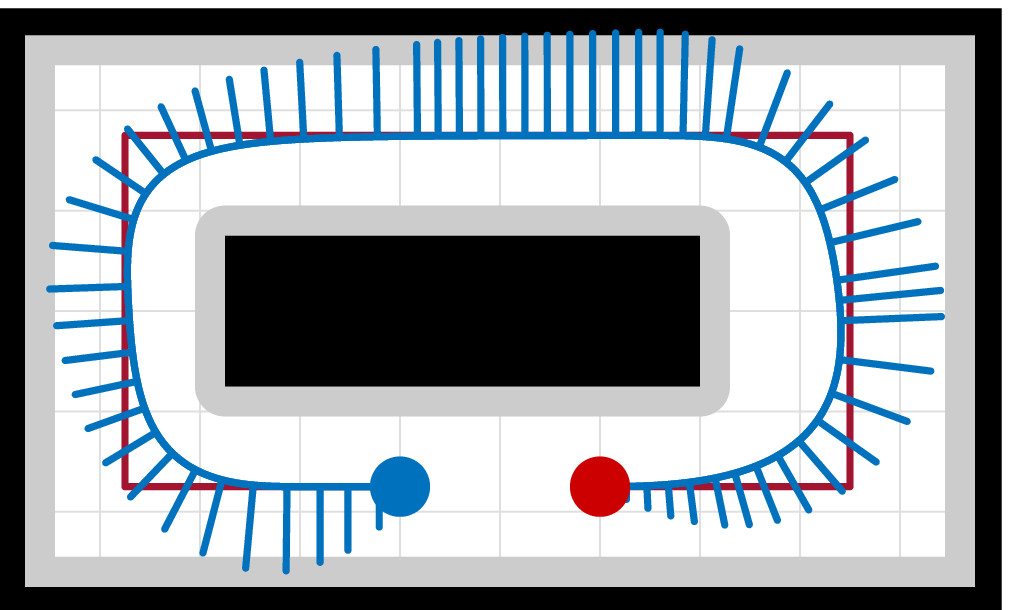}& 
\includegraphics[width=0.24\textwidth]{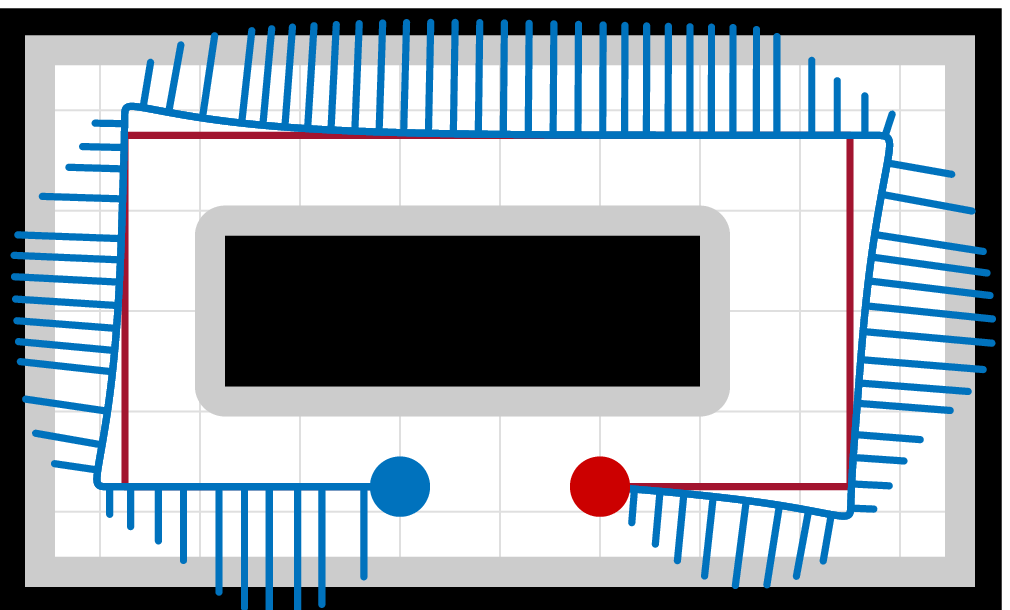} 
\\
\scriptsize{(a)} & \scriptsize{(b)}& \scriptsize{(c)} & \scriptsize{(d)}
\end{tabular}
\caption{Time-governed safe path following in a corridor environment for (top) the second-order, and  (bottom) the third-order robot model using (a,b) Lyapunov motion ellipsoids and (c,d) Vandermonde motion simplexes based on (a, c) reference-position-only feedback and (b, d) reference-position-and-velocity feedback. The robot trajectory (blue line) and the reference path (red line) are in clockwise direction starting at the blue circle (robot body) and ending at the red circle, and the robot velocity bars (blue) are placed along the robot trajectory.}
\label{fig.SafePathFollowingCorridor}
\end{figure*}

\begin{figure}[t]
\centering
\begin{tabular}{@{}c@{\hspace{1mm}}c@{}}
\includegraphics[width=0.49\columnwidth]{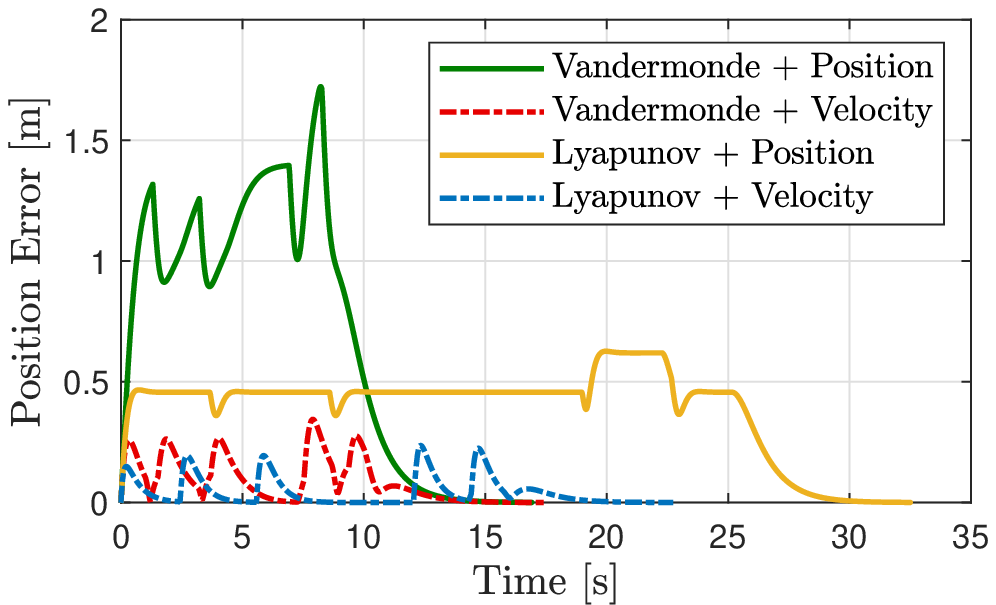} &
\includegraphics[width=0.49\columnwidth]{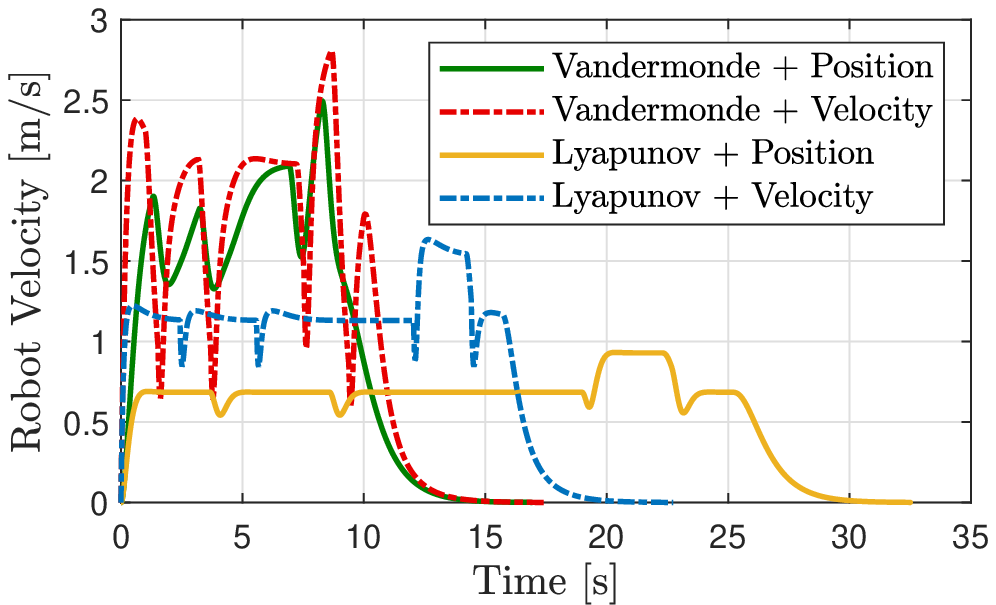} 
\\ 
\includegraphics[width=0.49\columnwidth]{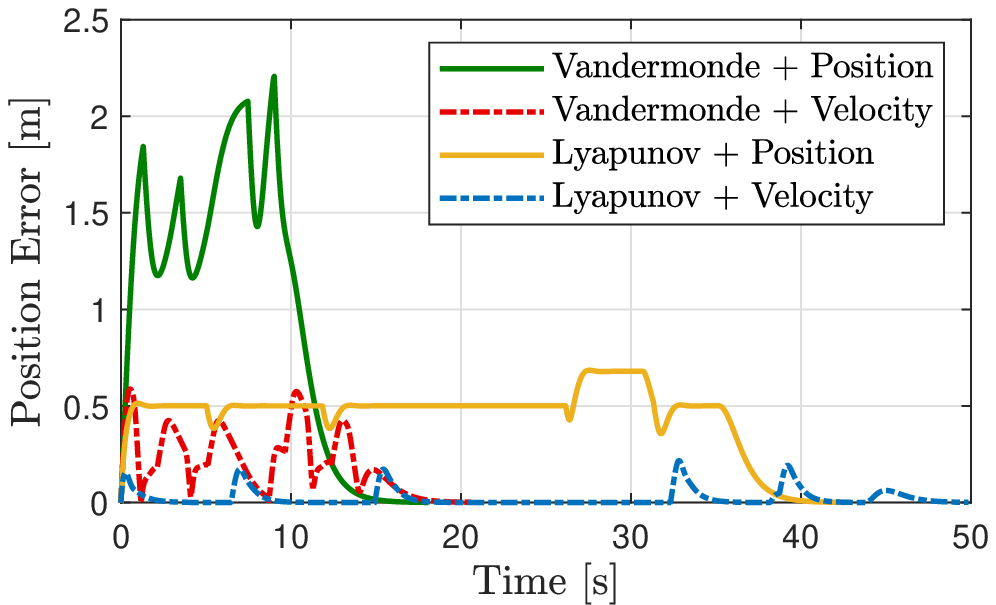} &
\includegraphics[width=0.49\columnwidth]{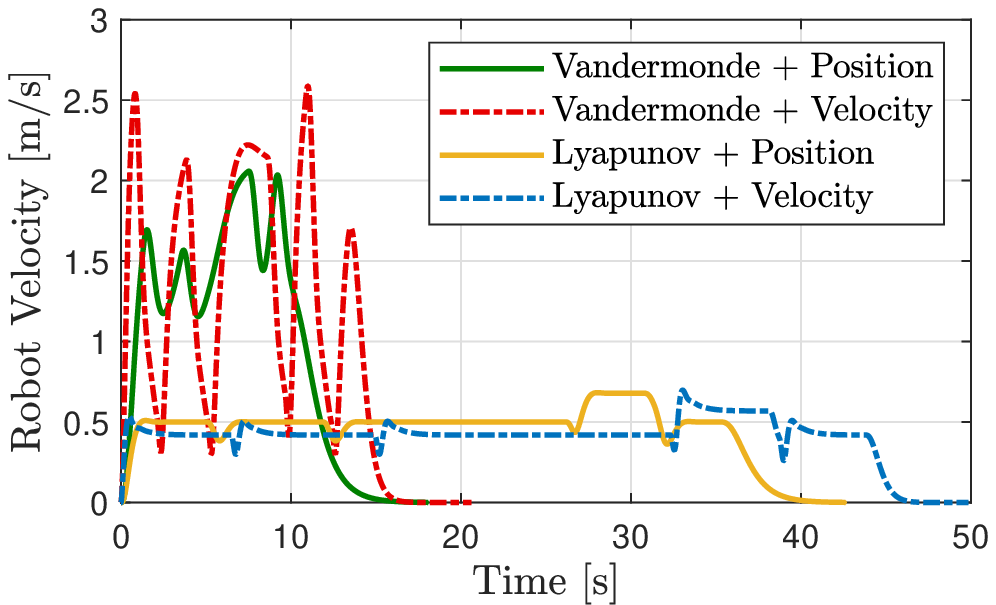} 
\end{tabular}
\caption{Performance of time-governed safe path-following control in a corridor environment for (top) the second-order  and (bottom) the third-order robot dynamics, where the path-following performance is measured in terms of  (left) position error $\norm{\pos(t) - \refpath(\pathparam(t))}$ and (right) robot velocity $\norm{\dot{\pos}(t)}$.}
\label{fig.PathFollowingPerformanceCorridor}
\end{figure}

\begin{figure*}[t]
\centering
\begin{tabular}{@{}c@{\hspace{1mm}}c@{\hspace{1mm}}c@{\hspace{1mm}}c@{}}
\includegraphics[width=0.24\textwidth]{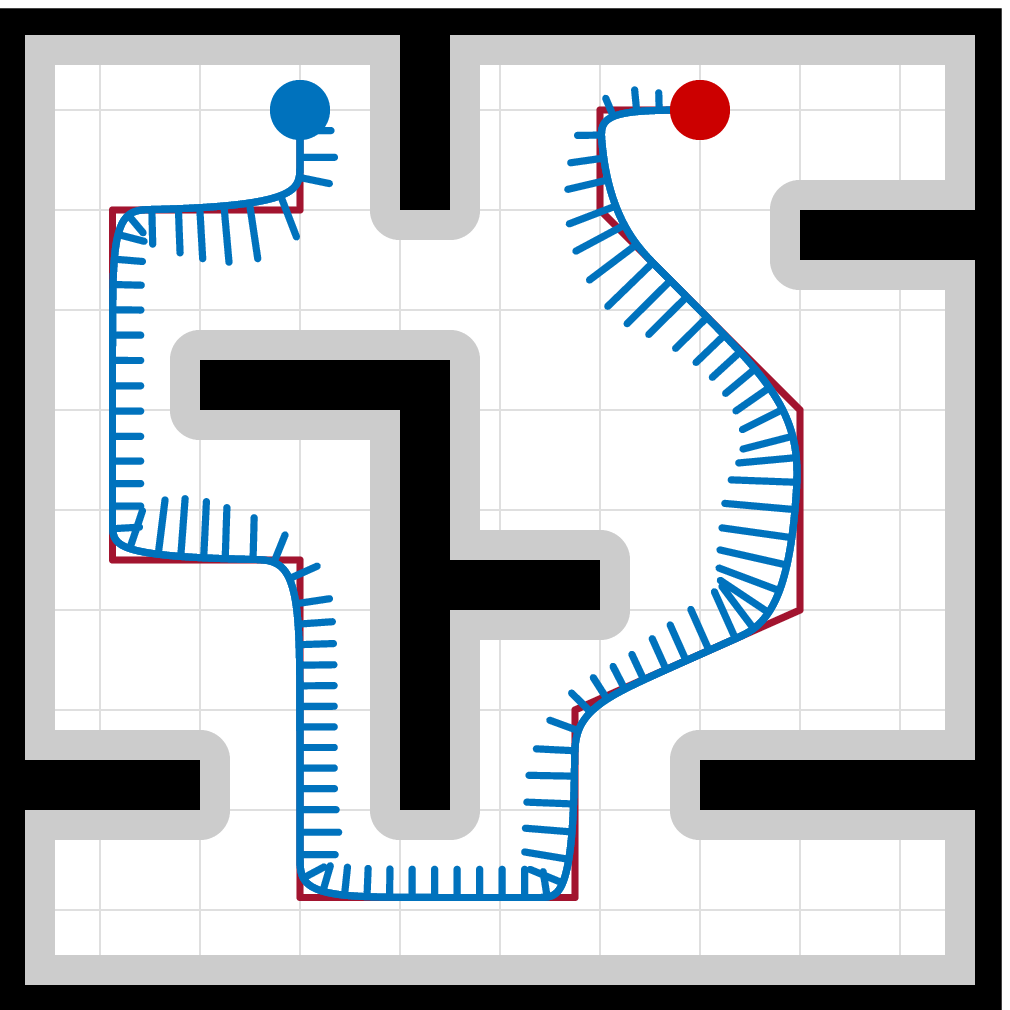} &
\includegraphics[width=0.24\textwidth]{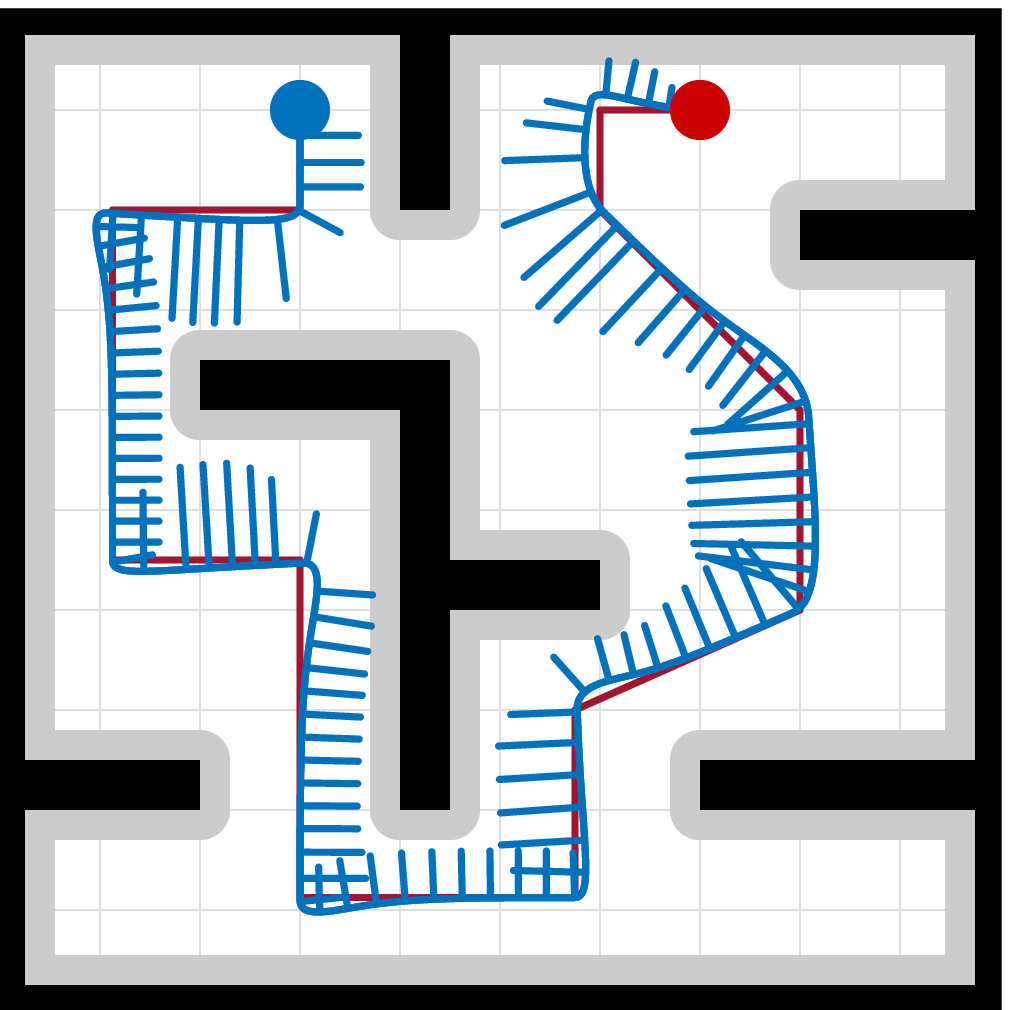} &
\includegraphics[width=0.24\textwidth]{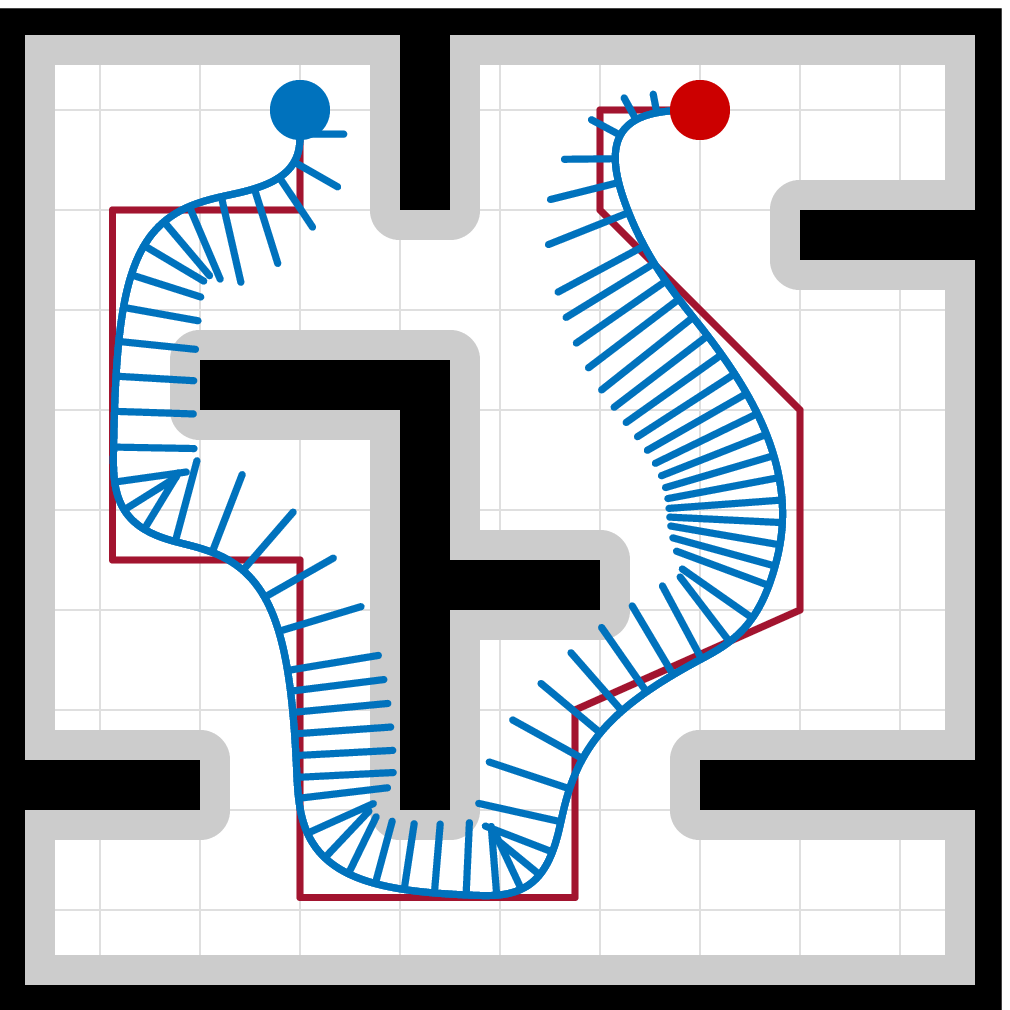}& 
\includegraphics[width=0.24\textwidth]{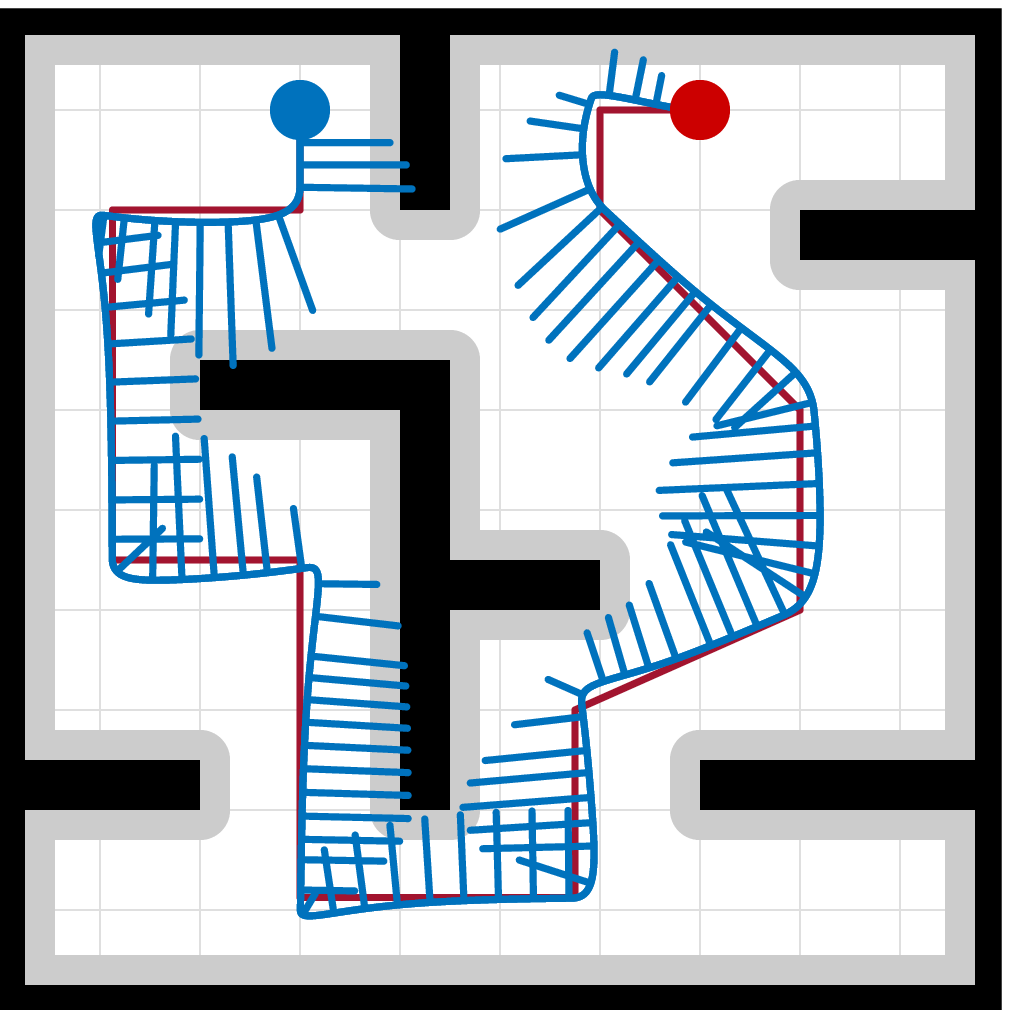} 
\\
\includegraphics[width=0.24\textwidth]{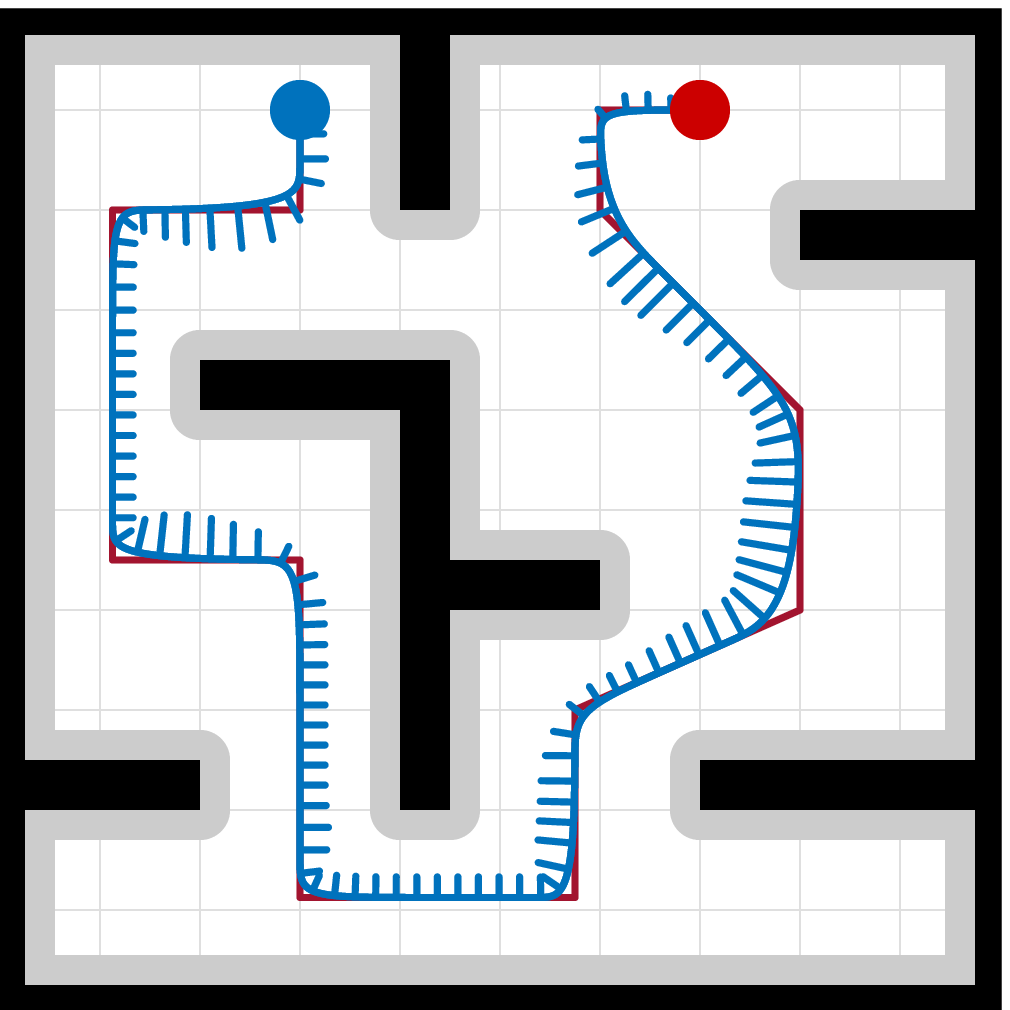} &
\includegraphics[width=0.24\textwidth]{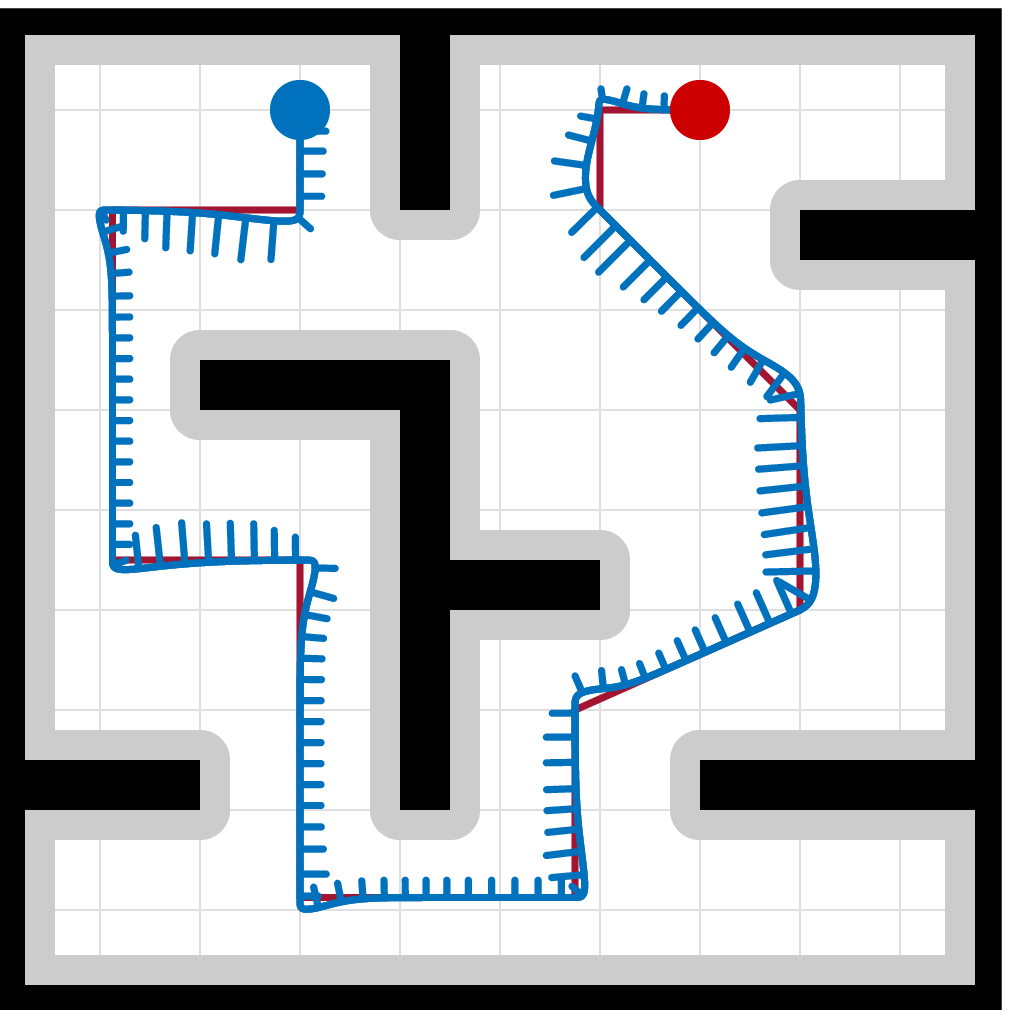} &
\includegraphics[width=0.24\textwidth]{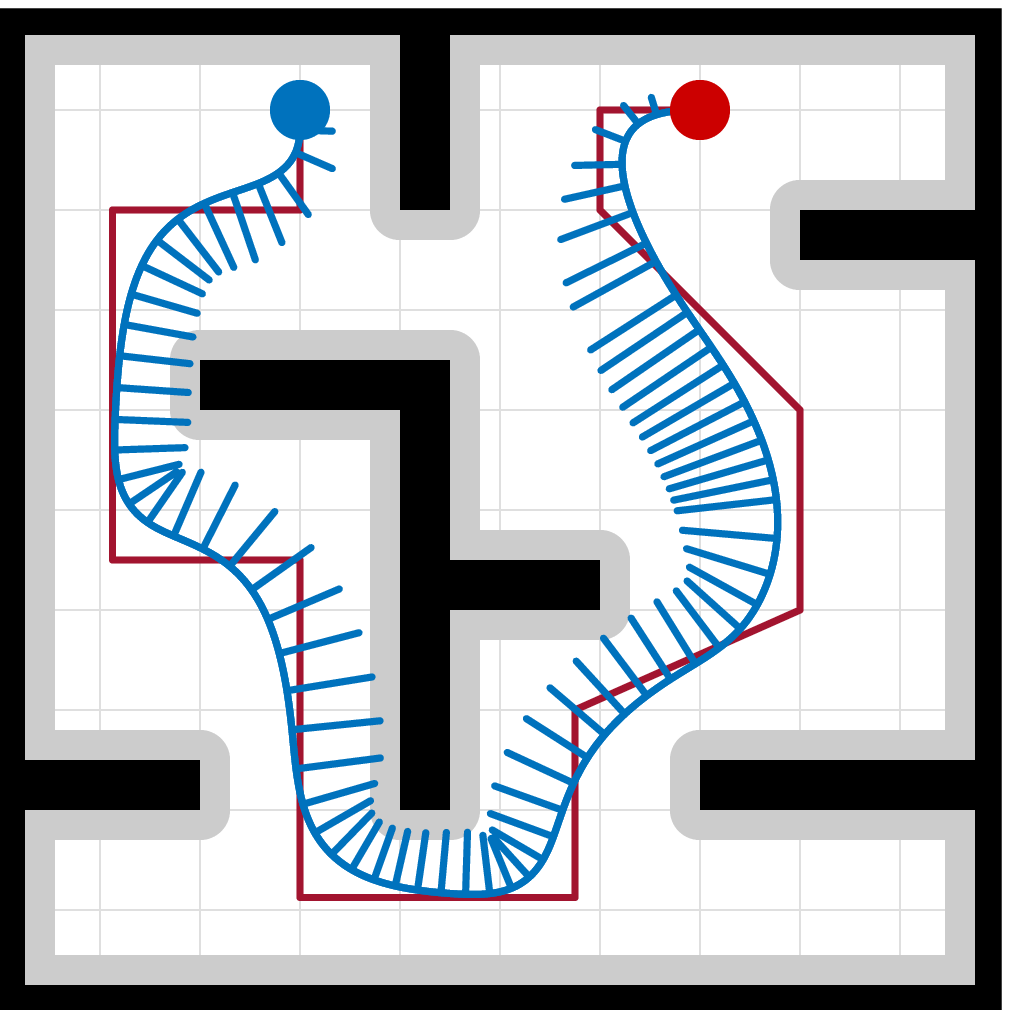}& 
\includegraphics[width=0.24\textwidth]{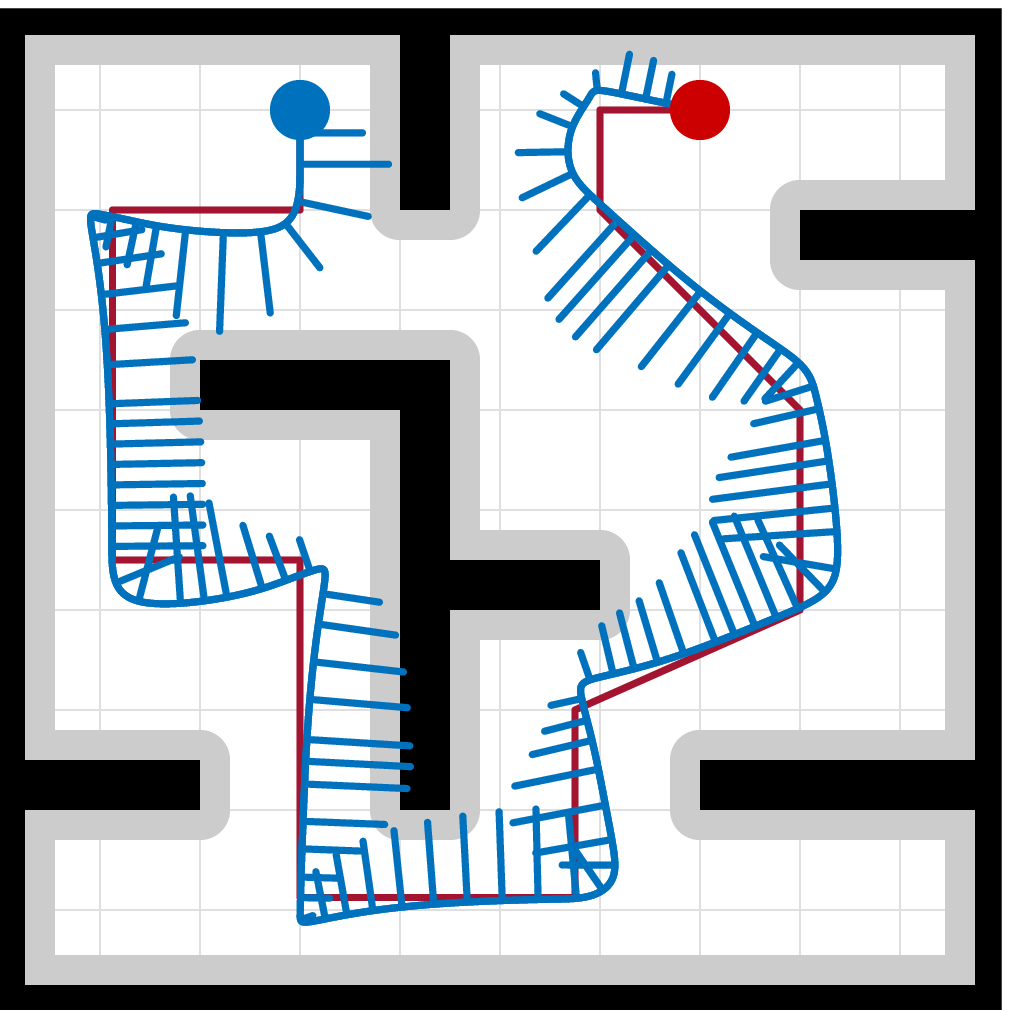} 
\\
\scriptsize{(a)} & \scriptsize{(b)}& \scriptsize{(c)} & \scriptsize{(d)}
\end{tabular}
\caption{Time-governed safe path following in an office environment for (top) the second-order, and  (bottom) the third-order robot model using (a,b) Lyapunov motion ellipsoids and (c,d) Vandermonde motion simplexes based on (a, c) reference-position-only feedback and (b, d) reference-position-and-velocity feedback. The robot trajectory (blue line) and the reference path (red line) are in counter-clockwise direction starting at the blue circle (robot body) and ending at the red circle, and the robot velocity bars (blue) are placed along the robot trajectory.}
\label{fig.SafePathFollowingOffice}
\end{figure*}

\begin{figure}[t]
\centering
\begin{tabular}{@{}c@{\hspace{1mm}}c@{}}
\includegraphics[width=0.49\columnwidth]{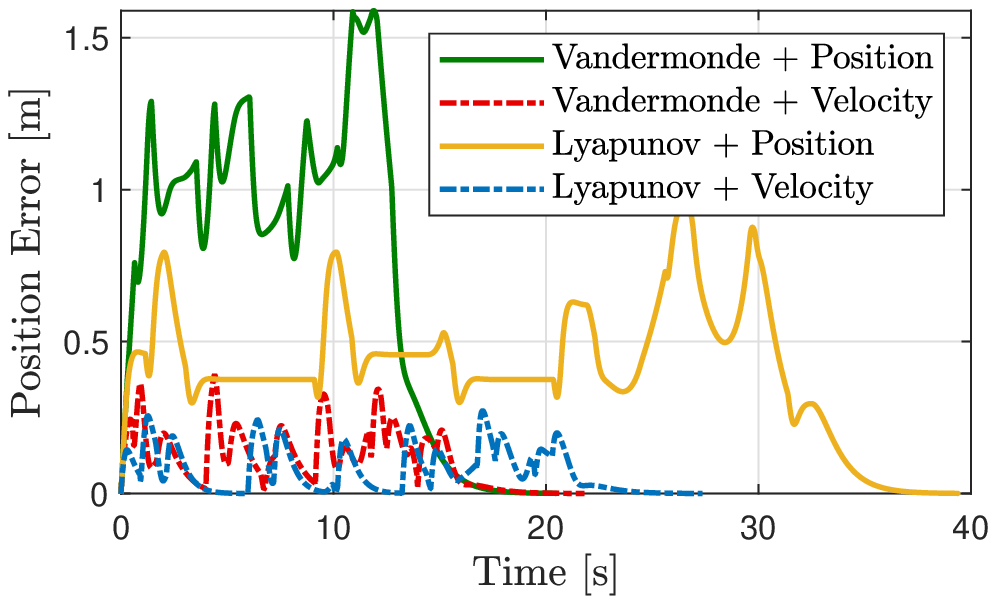} & 
\includegraphics[width=0.49\columnwidth]{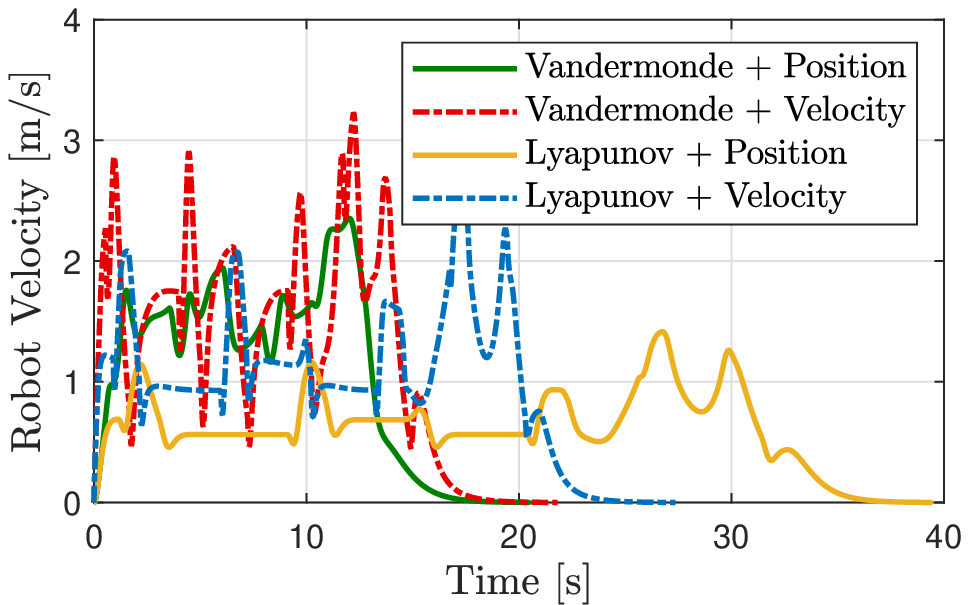} 
\\
\includegraphics[width=0.49\columnwidth]{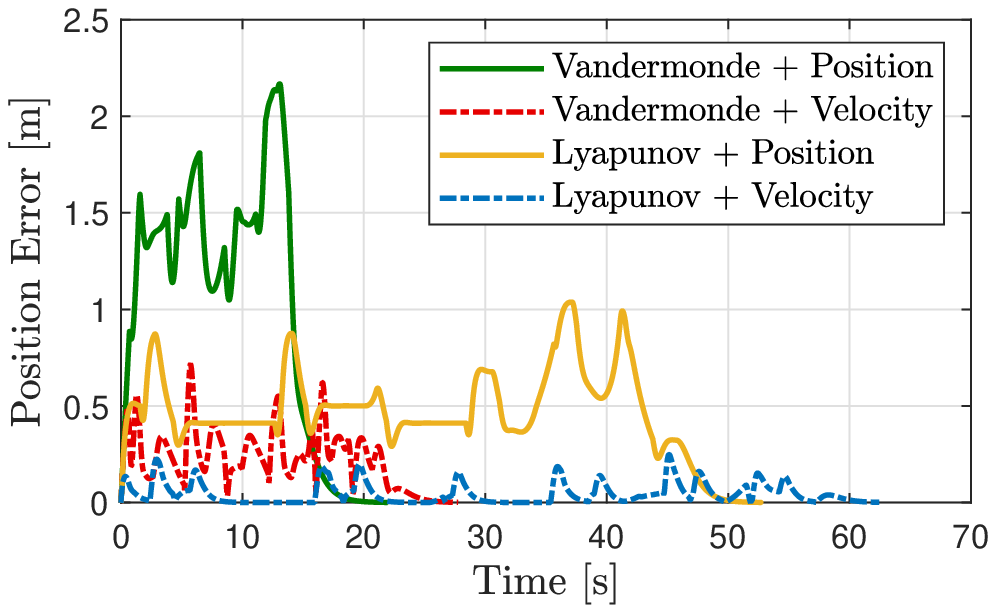} & 
\includegraphics[width=0.49\columnwidth]{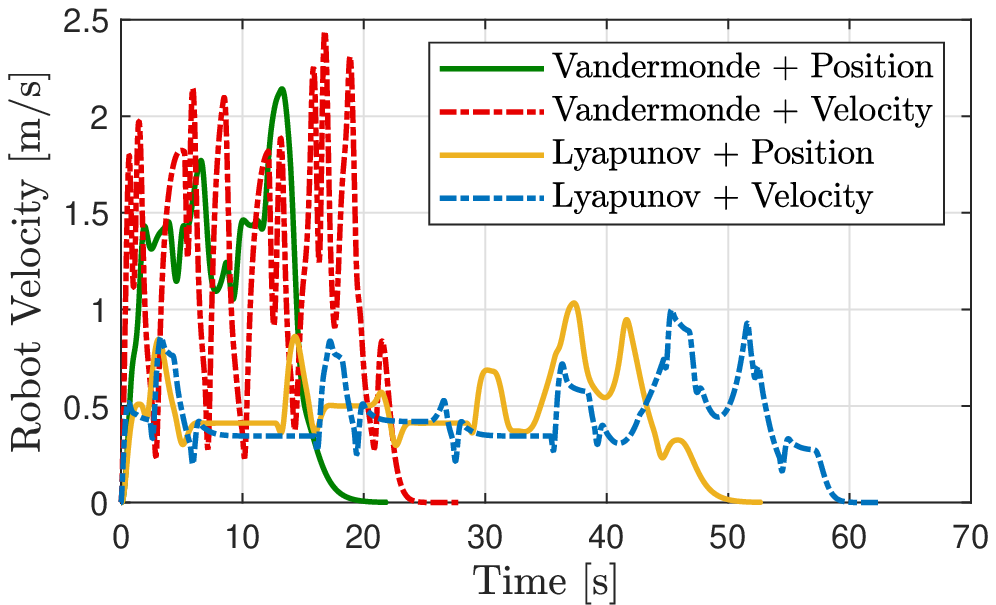} 
\end{tabular}
\caption{Performance of time-governed safe path following in an office environment for (top) the second-order  and (bottom) the third-order robot dynamics, where the path-following performance is measured in terms of  (left) position error $\norm{\pos(t) - \refpath(\pathparam(t))}$ and (right) robot velocity $\norm{\dot{\pos}(t)}$.}
\label{fig.PathFollowingPerformanceOffice}
\end{figure}

%%%%%%%%%%%%%%%%%%%%%%%%%%%%%%%%%%%%%%%%%%%%%%%%%%%%%%%%%%%%%%%%%%%%%%%%%%%%%%%%
%%%%%%%%%%%%%%%%%%%%%%%%%%%%%%%%%%%%%%%%%%%%%%%%%%%%%%%%%%%%%%%%%%%%%%%%%%%%%%%%
\section{Numerical Simulations}
\label{sec.NumericalSimulations}
%%%%%%%%%%%%%%%%%%%%%%%%%%%%%%%%%%%%%%%%%%%%%%%%%%%%%%%%%%%%%%%%%%%%%%%%%%%%%%%%
%%%%%%%%%%%%%%%%%%%%%%%%%%%%%%%%%%%%%%%%%%%%%%%%%%%%%%%%%%%%%%%%%%%%%%%%%%%%%%%%

In this section, we provide numerical simulations%
\footnote{\label{fn.Simulations} For all simulations, the characteristic polynomial roots of the PhD path-following control in \refeq{eq.PhDPathFollowingControl} are  set equally as $\phdroots = (-3.0, \ldots, -3.0)$, and we set the control coefficients for the time governor in \refeq{eq.SafeTimeGovernor} as $\gain_{\safetylevel} = 3.0$ and $\gain_{\pathparam} = 1.0$, and we use $\decaymat = \mat{I}_{\order \dimpos \times \order \dimpos}$ in \refeq{eq.PhDLyapunovEquation} for the Lyapunov motion prediction.
All simulations are obtained by numerically solving the time-governed path-following control dynamics using the \texttt{ode45} function of MATLAB.
We use the arc-length parametrization of a given reference path $\refpath(s)$ such that the reference path length $L$  determines the path parameter range as $[\minpathparam, \maxpathparam] = [0, L]$. 
Please see the accompanying video for the animated path-following motion.
} 
to demonstrate safe PhD path-following control of the second-order and the third-order robot models using Lyapunov and Vandermonde motion prediction in corridor-like and office-like environments.
We investigate the role of reference position and velocity feedback as well as feedback motion prediction on  the path-following performance and robot motion.
As a performance measure for path-following control, we consider the Euclidean distance $\norm{\pos(t)\! -\! \refpath(\pathparam(t))}$ between the robot position $\pos(t)$ and the reference path point $\refpath(\pathparam(t))$, and the robot velocity $\norm{\dot{\pos}(t)}$ which is related to the  travel time. 

%%%%%%%%%%%%%%%%%%%%%%%%%%%%%%%%%%%%%%%%%%%%%%%%%%%%%%%%%%%%%%%%%%%%%%%%%%%%%%%%%%
%%%%%%%%%%%%%%%%%%%%%%%%%%%%%%%%%%%%%%%%%%%%%%%%%%%%%%%%%%%%%%%%%%%%%%%%%%%%%%%%%%
\subsection{Safe Path-Following Control in a Corridor Environment}
%%%%%%%%%%%%%%%%%%%%%%%%%%%%%%%%%%%%%%%%%%%%%%%%%%%%%%%%%%%%%%%%%%%%%%%%%%%%%%%%%%
%%%%%%%%%%%%%%%%%%%%%%%%%%%%%%%%%%%%%%%%%%%%%%%%%%%%%%%%%%%%%%%%%%%%%%%%%%%%%%%%%%

As a first example, shown in \reffig{fig.SafePathFollowingCorridor}, we consider safe path-following control in a corridor environment, because safe and fast robot motion control in such tight spaces is challenging for highly dynamic robotic systems \cite{li_arslan_atanasov_ICRA2020}.
The provable safety and convergence properties (\refprop{prop.Safety} and \refprop{prop.Convergence}) of our time governed path-following control approach ensure that the robot successfully completes the path-following task without any collision with obstacles, irrespective of the order of robot dynamics, path-following control, and feedback motion prediction, but the resulting robot motion significantly differs in terms of path-following performance, robot speed, and so travel time.
As seen in \reffig{fig.SafePathFollowingCorridor} and \reffig{fig.PathFollowingPerformanceCorridor}, the more accurate Vandermonde motion prediction always results in faster robot motion and shorter travel time than the more conservative Lyapunov motion prediction.
Because symmetric Lyapunov ellipsoids always keep the robot cautious about sideways collisions with corridor walls whether it is relevant (e.g., when approaching to the end of a corridor) or not (e.g., while moving along the corridor).
Moreover, we observe in  \reffig{fig.SafePathFollowingCorridor} and \reffig{fig.PathFollowingPerformanceCorridor} that, compared to the PhD path-following control $\phdctrl_{\refpath}(\state, \pathparam,0)$ with reference-position-only feedback, the PhD path-following control $\phdctrl_{\refpath}(\state, \pathparam, \dot{\pathparam})$ with reference-position-and-velocity feedback improves the path-following performance by achieving a lower distance between the robot and the reference path in average, which mitigates the corner cutting problem, but results in overshooting around sharp corners.
Also, since feedback motion prediction becomes naturally less accurate with increasing system order, we see that robot motion gets slower for high-order robot dynamics, which might be resolved by using higher-order-reference-derivative feedback and feedforward control, i.e., path-tracking control.

%%%%%%%%%%%%%%%%%%%%%%%%%%%%%%%%%%%%%%%%%%%%%%%%%%%%%%%%%%%%%%%%%%%%%%%%%%%%%%%%%%
%%%%%%%%%%%%%%%%%%%%%%%%%%%%%%%%%%%%%%%%%%%%%%%%%%%%%%%%%%%%%%%%%%%%%%%%%%%%%%%%%%
\subsection{Safe Path-Following Control in an Office Environment}
%%%%%%%%%%%%%%%%%%%%%%%%%%%%%%%%%%%%%%%%%%%%%%%%%%%%%%%%%%%%%%%%%%%%%%%%%%%%%%%%%%
%%%%%%%%%%%%%%%%%%%%%%%%%%%%%%%%%%%%%%%%%%%%%%%%%%%%%%%%%%%%%%%%%%%%%%%%%%%%%%%%%%

To demonstrate how feedback motion prediction enables autonomously adapting robot motion around arbitrarily shaped and placed obstacles, we consider safe path-following control in an office-like cluttered environment, shown in \reffig{fig.SafePathFollowingOffice}.
We observe in \reffig{fig.SafePathFollowingOffice} that the robot slows down when (the intended motion of) the robot becomes closer to obstacles; and it speeds up when there is a large open space (in the front of the robot). 
Hence, as seen in \reffig{fig.SafePathFollowingOffice} and \reffig{fig.PathFollowingPerformanceOffice}, asymmetric Vandermonde motion simplexes show a superior performance in adapting robot motion around obstacles because  symmetric Lyapunov motion ellipsoids are less accurate and capable of relating the geometry of the robot motion to the geometry of the environment.
Moreover, compared to the reference-position-only feedback control $\phdctrl_{\refpath}(\state, \pathparam, 0)$,  the PhD path-following control $\phdctrl_{\refpath}(\state, \pathparam, \dot{\pathparam})$ with reference-position-and-velocity feedback  yields better path-following performance with a lower gap between the robot and reference path, but this causes less smooth and sharper turns if the reference path has sharp turns.
Overall, safe PhD path-following control with Vandermonde motion prediction and reference-position-only feedback yields faster and smoother robot motion.

%%%%%%%%%%%%%%%%%%%%%%%%%%%%%%%%%%%%%%%%%%%%%%%%%%%%%%%%%%%%%%%%%%%%%%%%%%%%%%%%
%%%%%%%%%%%%%%%%%%%%%%%%%%%%%%%%%%%%%%%%%%%%%%%%%%%%%%%%%%%%%%%%%%%%%%%%%%%%%%%%
\section{Conclusions}
\label{sec.Conclusions}
%%%%%%%%%%%%%%%%%%%%%%%%%%%%%%%%%%%%%%%%%%%%%%%%%%%%%%%%%%%%%%%%%%%%%%%%%%%%%%%%
%%%%%%%%%%%%%%%%%%%%%%%%%%%%%%%%%%%%%%%%%%%%%%%%%%%%%%%%%%%%%%%%%%%%%%%%%%%%%%%%

In this paper, we present a provably correct feedback path time-parametrization approach, called \emph{time governors}, for safe path-following control using feedback motion prediction.
We describe a generic time governor design approach, with essential building blocks and their technical requirements, which allows systematically integrating high-level path planning and low-level path-following control methods with safety and convergence guarantees. 
We provide an example application of time governors for PhD path-following control with Lyapunov and Vandemonde motion predictions. 
In numerical simulations, we demonstrate the effectiveness of time governors for safe PhD path-following control around obstacles.
We conclude that accurate feedback motion prediction relates the geometry of robot motion to the geometry of environment more effectively and so is essential for fast robot motion. 
Moreover, path-following performance can be further improved by closing the gap between path-following and trajectory-tracking approaches using higher-order-reference-derivative feedback.

Our current work on progress focuses on the design of time governors for safe path-following control of nonholonomically constrained robots such as differential drive vehicles \cite{isleyen_vandewouw_arslan_arXiv2022} for sensor-based mobile robot navigation in dynamic and unknown environments \cite{arslan_koditschek_IJRR2019}.  
Another promising research direction is the design of higher-order observer-based time governors for safe, fast, and  precise path-following control.

%%%%%%%%%%%%%%%%%%%%%%%%%%%%%%%%%%%%%%%%%%%%%%%%%%%%%%%%%%%%%%%%%%%%%%%%%%%%%%%%%
%%%%%%%%%%%%%%%%%%%%%%%%%%%%%%%%%%%%%%%%%%%%%%%%%%%%%%%%%%%%%%%%%%%%%%%%%%%%%%%%%
%\section*{Acknowledgments}
%This should be a simple paragraph before the References to thank those individuals and institutions who have supported your work on this article.
%%%%%%%%%%%%%%%%%%%%%%%%%%%%%%%%%%%%%%%%%%%%%%%%%%%%%%%%%%%%%%%%%%%%%%%%%%%%%%%%%
%%%%%%%%%%%%%%%%%%%%%%%%%%%%%%%%%%%%%%%%%%%%%%%%%%%%%%%%%%%%%%%%%%%%%%%%%%%%%%%%%

%{\appendices
%\section*{Proof of the First Zonklar Equation}
%Appendix one text goes here.
% You can choose not to have a title for an appendix if you want by leaving the argument blank
%\section*{Proof of the Second Zonklar Equation}
%Appendix two text goes here.}

%%%%%%%%%%%%%%%%%%%%%%%%%%%%%%%%%%%%%%%%%%%%%%%%%%%%%%%%%%%%%%%%%%%%%%%%%%%%%%%%
%%%%%%%%%%%%%%%%%%%%%%%%%%%%%%%%%%%%%%%%%%%%%%%%%%%%%%%%%%%%%%%%%%%%%%%%%%%%%%%%
% References
%%%%%%%%%%%%%%%%%%%%%%%%%%%%%%%%%%%%%%%%%%%%%%%%%%%%%%%%%%%%%%%%%%%%%%%%%%%%%%%%
%%%%%%%%%%%%%%%%%%%%%%%%%%%%%%%%%%%%%%%%%%%%%%%%%%%%%%%%%%%%%%%%%%%%%%%%%%%%%%%%

%References are important to the reader; therefore, each citation must be complete and correct. If at all possible, references should be commonly available publications.

\bibliographystyle{IEEEtran}
\bibliography{references}

\vfill

\end{document}